\documentclass[10pt,journal,cspaper,compsoc]{IEEEtran}
%
% If IEEEtran.cls has not been installed into the LaTeX system files,
% manually specify the path to it like:
% \documentclass[12pt,journal,compsoc]{../sty/IEEEtran}

% Some very useful LaTeX packages include:
% (uncomment the ones you want to load)

% *** MISC UTILITY PACKAGES ***
%
%\usepackage{ifpdf}
% Heiko Oberdiek's ifpdf.sty is very useful if you need conditional
% compilation based on whether the output is pdf or dvi.
% usage:
% \ifpdf
%   % pdf code
% \else
%   % dvi code
% \fi
% The latest version of ifpdf.sty can be obtained from:
% http://www.ctan.org/tex-archive/macros/latex/contrib/oberdiek/
% Also, note that IEEEtran.cls V1.7 and later provides a builtin
% \ifCLASSINFOpdf conditional that works the same way.
% When switching from latex to pdflatex and vice-versa, the compiler may
% have to be run twice to clear warning/error messages.

% *** CITATION PACKAGES ***
%
\ifCLASSOPTIONcompsoc
  % IEEE Computer Society needs nocompress option
  % requires cite.sty v4.0 or later (November 2003)
  \usepackage[nocompress]{cite}
\else
  % normal IEEE
   \usepackage{cite}
\fi
% cite.sty was written by Donald Arseneau
% V1.6 and later of IEEEtran pre-defines the format of the cite.sty package
% \cite{} output to follow that of IEEE. Loading the cite package will
% result in citation numbers being automatically sorted and properly
% "compressed/ranged". e.g., [1], [9], [2], [7], [5], [6] without using
% cite.sty will become [1], [2], [5]--[7], [9] using cite.sty. cite.sty's
% \cite will automatically add leading space, if needed. Use cite.sty's
% noadjust option (cite.sty V3.8 and later) if you want to turn this off.
% cite.sty is already installed on most LaTeX systems. Be sure and use
% version 4.0 (2003-05-27) and later if using hyperref.sty. cite.sty does
% not currently provide for hyperlinked citations.
% The latest version can be obtained at:
% http://www.ctan.org/tex-archive/macros/latex/contrib/cite/
% The documentation is contained in the cite.sty file itself.
%
% Note that some packages require special options to format as the Computer
% Society requires. In particular, Computer Society  papers do not use
% compressed citation ranges as is done in typical IEEE papers
% (e.g., [1]-[4]). Instead, they list every citation separately in order
% (e.g., [1], [2], [3], [4]). To get the latter we need to load the cite
% package with the nocompress option which is supported by cite.sty v4.0
% and later. Note also the use of a CLASSOPTION conditional provided by
% IEEEtran.cls V1.7 and later.

% *** GRAPHICS RELATED PACKAGES ***
%
\ifCLASSINFOpdf
  \usepackage[pdftex]{graphicx}
  % declare the path(s) where your graphic files are
  \graphicspath{{./}}
  % and their extensions so you won't have to specify these with
  % every instance of \includegraphics
  \DeclareGraphicsExtensions{.pdf,.jpg,.jpeg,.png}
\else
  % or other class option (dvipsone, dvipdf, if not using dvips). graphicx
  % will default to the driver specified in the system graphics.cfg if no
  % driver is specified.
  \usepackage[dvips]{graphicx}
  % declare the path(s) where your graphic files are
   \graphicspath{{./}}
  % and their extensions so you won't have to specify these with
  % every instance of \includegraphics
  \DeclareGraphicsExtensions{.eps}
\fi
% graphicx was written by David Carlisle and Sebastian Rahtz. It is
% required if you want graphics, photos, etc. graphicx.sty is already
% installed on most LaTeX systems. The latest version and documentation can
% be obtained at: 
% http://www.ctan.org/tex-archive/macros/latex/required/graphics/
% Another good source of documentation is "Using Imported Graphics in
% LaTeX2e" by Keith Reckdahl which can be found as epslatex.ps or
% epslatex.pdf at: http://www.ctan.org/tex-archive/info/
%
% latex, and pdflatex in dvi mode, support graphics in encapsulated
% postscript (.eps) format. pdflatex in pdf mode supports graphics
% in .pdf, .jpeg, .png and .mps (metapost) formats. Users should ensure
% that all non-photo figures use a vector format (.eps, .pdf, .mps) and
% not a bitmapped formats (.jpeg, .png). IEEE frowns on bitmapped formats
% which can result in "jaggedy"/blurry rendering of lines and letters as
% well as large increases in file sizes.
%
% You can find documentation about the pdfTeX application at:
% http://www.tug.org/applications/pdftex

% *** MATH PACKAGES ***
%
\usepackage[cmex10]{amsmath}
% A popular package from the American Mathematical Society that provides
% many useful and powerful commands for dealing with mathematics. If using
% it, be sure to load this package with the cmex10 option to ensure that
% only type 1 fonts will utilized at all point sizes. Without this option,
% it is possible that some math symbols, particularly those within
% footnotes, will be rendered in bitmap form which will result in a
% document that can not be IEEE Xplore compliant!
%
% Also, note that the amsmath package sets \interdisplaylinepenalty to 10000
% thus preventing page breaks from occurring within multiline equations. Use:
%\interdisplaylinepenalty=2500
% after loading amsmath to restore such page breaks as IEEEtran.cls normally
% does. amsmath.sty is already installed on most LaTeX systems. The latest
% version and documentation can be obtained at:
% http://www.ctan.org/tex-archive/macros/latex/required/amslatex/math/

%*******table package ********
% *** Added by Gea ***
\usepackage{booktabs}
\usepackage[table]{xcolor}
\usepackage{color, colortbl}
\usepackage{tikz}
\usetikzlibrary{shapes,arrows}
\usepackage{multirow}
\usepackage{flafter}
\usepackage{float}
%\usepackage[section]{placeins}
%%\usepackage{multicol}
%%\usepackage{longtable}
%%\usepackage{supertabular}
%%\usepackage{blindtext}
%%\usepackage{multicol, blindtext}

% *** SPECIALIZED LIST PACKAGES ***
%
\usepackage{algorithmic}
\usepackage[ruled,lined,commentsnumbered]{algorithm2e}
% algorithmic.sty was written by Peter Williams and Rogerio Brito.
% This package provides an algorithmic environment fo describing algorithms.
% You can use the algorithmic environment in-text or within a figure
% environment to provide for a floating algorithm. Do NOT use the algorithm
% floating environment provided by algorithm.sty (by the same authors) or
% algorithm2e.sty (by Christophe Fiorio) as IEEE does not use dedicated
% algorithm float types and packages that provide these will not provide
% correct IEEE style captions. The latest version and documentation of
% algorithmic.sty can be obtained at:
% http://www.ctan.org/tex-archive/macros/latex/contrib/algorithms/
% There is also a support site at:
% http://algorithms.berlios.de/index.html
% Also of interest may be the (relatively newer and more customizable)
% algorithmicx.sty package by Szasz Janos:
% http://www.ctan.org/tex-archive/macros/latex/contrib/algorithmicx/

% *** ALIGNMENT PACKAGES ***
%
\usepackage{array}
% Frank Mittelbach's and David Carlisle's array.sty patches and improves
% the standard LaTeX2e array and tabular environments to provide better
% appearance and additional user controls. As the default LaTeX2e table
% generation code is lacking to the point of almost being broken with
% respect to the quality of the end results, all users are strongly
% advised to use an enhanced (at the very least that provided by array.sty)
% set of table tools. array.sty is already installed on most systems. The
% latest version and documentation can be obtained at:
% http://www.ctan.org/tex-archive/macros/latex/required/tools/

\usepackage{mdwmath}
\usepackage{mdwtab}
% Also highly recommended is Mark Wooding's extremely powerful MDW tools,
% especially mdwmath.sty and mdwtab.sty which are used to format equations
% and tables, respectively. The MDWtools set is already installed on most
% LaTeX systems. The lastest version and documentation is available at:
% http://www.ctan.org/tex-archive/macros/latex/contrib/mdwtools/

% IEEEtran contains the IEEEeqnarray family of commands that can be used to
% generate multiline equations as well as matrices, tables, etc., of high
% quality.

\usepackage{eqparbox}
% Also of notable interest is Scott Pakin's eqparbox package for creating
% (automatically sized) equal width boxes - aka "natural width parboxes".
% Available at:
% http://www.ctan.org/tex-archive/macros/latex/contrib/eqparbox/

% *** SUBFIGURE PACKAGES ***
\ifCLASSOPTIONcompsoc
\usepackage[tight,normalsize,sf,SF]{subfigure}
\else
\usepackage[tight,footnotesize]{subfigure}
\fi

\usepackage{stfloats}
\usepackage{hyperref}
%\usepackage{url}

% url.sty was written by Donald Arseneau. It provides better support for
% handling and breaking URLs. url.sty is already installed on most LaTeX
% systems. The latest version can be obtained at:
% http://www.ctan.org/tex-archive/macros/latex/contrib/misc/
% Read the url.sty source comments for usage information. Basically,
% \url{my_url_here}.

% *** Do not adjust lengths that control margins, column widths, etc. ***
% *** Do not use packages that alter fonts (such as pslatex).         ***
% There should be no need to do such things with IEEEtran.cls V1.6 and later.
% (Unless specifically asked to do so by the journal or conference you plan
% to submit to, of course. )

% correct bad hyphenation here
\usepackage{breqn}
\hyphenation{op-tical net-works semi-conduc-tor}
\newtheorem{theorem}{Theorem}
\newtheorem{definition}{Definition}
 % thin space, limits underneath in displays
\DeclareMathOperator*{\argmin}{argmin} % thin space, limits underneath in displays

\begin{document}
%
% paper title
% can use linebreaks \\ within to get better formatting as desired
\title{A Framework for Supervised Heterogeneous Transfer Learning using Dynamic Distribution Adaptation and Manifold Regularization}
%
%
% author names and IEEE memberships
% note positions of commas and nonbreaking spaces ( ~ ) LaTeX will not break
% a structure at a ~ so this keeps an author's name from being broken across
% two lines.
% use \thanks{} to gain access to the first footnote area
% a separate \thanks must be used for each paragraph as LaTeX2e's \thanks
% was not built to handle multiple paragraphs
%
%
%\IEEEcompsocitemizethanks is a special \thanks that produces the bulleted
% lists the Computer Society journals use for "first footnote" author
% affiliations. Use \IEEEcompsocthanksitem which works much like \item
% for each affiliation group. When not in compsoc mode,
% \IEEEcompsocitemizethanks becomes like \thanks and
% \IEEEcompsocthanksitem becomes a line break with idention. This
% facilitates dual compilation, although admittedly the differences in the
% desired content of \author between the different types of papers makes a
% one-size-fits-all approach a daunting prospect. For instance, compsoc 
% journal papers have the author affiliations above the "Manuscript
% received ..."  text while in non-compsoc journals this is reversed. Sigh.

\author{Md~Geaur~Rahman and~Md~Zahidul~Islam % <-this % stops a space
\IEEEcompsocitemizethanks{\IEEEcompsocthanksitem Md~Geaur~Rahman is a Research Fellow in the School of Computing, Mathematics and Engineering, Charles Sturt University, Australia, and a Professor in the Department of Computer Science and Mathematics, Bangladesh Agricultural University, Bangladesh.\protect\\
% note need leading \protect in front of \\ to get a newline within \thanks as
% \\ is fragile and will error, could use \hfil\break instead.
E-mail: grahman@csu.edu.au
\IEEEcompsocthanksitem Md~Zahidul~Islam is a Professor in Computer Science, in the School of Computing, Mathematics and Engineering, Charles Sturt University, Bathurst, NSW 2795, Australia.\protect\\
% note need leading \protect in front of \\ to get a newline within \thanks as
% \\ is fragile and will error, could use \hfil\break instead.
E-mail: zislam@csu.edu.au}% <-this % stops a space
\thanks{Revised manuscript received August~30,~2022}}

% note the % following the last \IEEEmembership and also \thanks - 
% these prevent an unwanted space from occurring between the last author name
% and the end of the author line. i.e., if you had this:
% 
% \author{....lastname \thanks{...} \thanks{...} }
%                     ^------------^------------^----Do not want these spaces!
%
% a space would be appended to the last name and could cause every name on that
% line to be shifted left slightly. This is one of those "LaTeX things". For
% instance, "\textbf{A} \textbf{B}" will typeset as "A B" not "AB". To get
% "AB" then you have to do: "\textbf{A}\textbf{B}"
% \thanks is no different in this regard, so shield the last } of each \thanks
% that ends a line with a % and do not let a space in before the next \thanks.
% Spaces after \IEEEmembership other than the last one are OK (and needed) as
% you are supposed to have spaces between the names. For what it is worth,
% this is a minor point as most people would not even notice if the said evil
% space somehow managed to creep in.

% The paper headers
%%\markboth{Journal of \LaTeX\ Class Files,~Vol.~6, No.~1, January~2007}%
\markboth{IEEE Transactions on Services Computing, August~2022}%
{Rahman \MakeLowercase{\textit{et al.}}: TLF: A Transfer Learning Framework}
% The only time the second header will appear is for the odd numbered pages
% after the title page when using the twoside option.
% 
% *** Note that you probably will NOT want to include the author's ***
% *** name in the headers of peer review papers.                   ***
% You can use \ifCLASSOPTIONpeerreview for conditional compilation here if
% you desire.

% The publisher's ID mark at the bottom of the page is less important with
% Computer Society journal papers as those publications place the marks
% outside of the main text columns and, therefore, unlike regular IEEE
% journals, the available text space is not reduced by their presence.
% If you want to put a publisher's ID mark on the page you can do it like
% this:
\IEEEpubid{0000--0000/00\$00.00~\copyright~2022 IEEE}
% or like this to get the Computer Society new two part style.
%\IEEEpubid{\makebox[\columnwidth]{\hfill 0000--0000/00/\$00.00~\copyright~2007 IEEE}%
%\hspace{\columnsep}\makebox[\columnwidth]{Published by the IEEE Computer Society\hfill}}
% Remember, if you use this you must call \IEEEpubidadjcol in the second
% column for its text to clear the IEEEpubid mark (Computer Society jorunal
% papers don't need this extra clearance.)

% for Computer Society papers, we must declare the abstract and index terms
% PRIOR to the title within the \IEEEcompsoctitleabstractindextext IEEEtran
% command as these need to go into the title area created by \maketitle.
\IEEEcompsoctitleabstractindextext{%
\begin{abstract}
%\boldmath

Transfer learning aims to learn classifiers for a target domain by transferring knowledge from a source domain. However, due to two main issues: feature discrepancy and distribution divergence, transfer learning can be a very difficult problem in practice. In this paper, we present a framework called TLF that builds a classifier for the target domain having only a few labeled training records by transferring knowledge from the source domain having many labeled records. While existing methods often focus on one issue and leave the other one for further work, TLF is capable of handling both issues simultaneously. In TLF, we alleviate feature discrepancy by identifying shared label distributions that act as the pivots to bridge the domains. We handle distribution divergence by simultaneously optimizing the structural risk functional, joint distributions between domains, and the manifold consistency underlying marginal distributions. Moreover, for the manifold consistency we exploit its intrinsic properties by identifying $k$ nearest neighbors of a record, where $k$ is determined automatically. We evaluate TLF on seven publicly available datasets and compare performances of TLF and fourteen state-of-the-art techniques. Our experimental results, including statistical sign test and Nemenyi test analyses, indicate a clear superiority of TLF over the state-of-the-art techniques.

\end{abstract}
% IEEEtran.cls defaults to using nonbold math in the Abstract.
% This preserves the distinction between vectors and scalars. However,
% if the journal you are submitting to favors bold math in the abstract,
% then you can use LaTeX's standard command \boldmath at the very start
% of the abstract to achieve this. Many IEEE journals frown on math
% in the abstract anyway. In particular, the Computer Society does
% not want either math or citations to appear in the abstract.

% Note that keywords are not normally used for peer review papers.
\begin{IEEEkeywords}
Transfer learning, domain adaptation, manifold regularization, random forest, dynamic distribution adaptation.
\end{IEEEkeywords}}

% make the title area
\maketitle

% To allow for easy dual compilation without having to reenter the
% abstract/keywords data, the \IEEEcompsoctitleabstractindextext text will
% not be used in maketitle, but will appear (i.e., to be "transported")
% here as \IEEEdisplaynotcompsoctitleabstractindextext when compsoc mode
% is not selected <OR> if conference mode is selected - because compsoc
% conference papers position the abstract like regular (non-compsoc)
% papers do!
\IEEEdisplaynotcompsoctitleabstractindextext
% \IEEEdisplaynotcompsoctitleabstractindextext has no effect when using
% compsoc under a non-conference mode.

% For peer review papers, you can put extra information on the cover
% page as needed:
% \ifCLASSOPTIONpeerreview
% \begin{center} \bfseries EDICS Category: 3-BBND \end{center}
% \fi
%
% For peerreview papers, this IEEEtran command inserts a page break and
% creates the second title. It will be ignored for other modes.
\IEEEpeerreviewmaketitle

%Introdution of our paper
\section{Introduction}
\label{introduction}

Nowadays, Transfer Learning (TL) has made breakthroughs in solving real-world problems, including activity recognition, sentiment classification, document analysis and indoor localization~\cite{niu2020decade, sukhija2019supervised}. Traditional Machine Learning (ML) methods cannot handle these problems due to various reasons, including insufficient data for building a model and distribution mismatch between training and test datasets~\cite{long2014adaptation, wang2020transfer}.

TL algorithms build a model for a domain (often referred to as the target domain), which has insufficient data, by transferring knowledge from single or multiple auxiliary domains (referred to as source domains)~\cite{niu2020decade, sukhija2019supervised}. However, commonly used TL methods assume that the source domain dataset (SDD) and target domain dataset (TDD) have the same set of attributes~\cite{long2014adaptation, bruzzone2009domain}. In addition, traditional supervised TL methods assume that the datasets have the same set of labels~\cite{long2014adaptation, wang2020transfer, wang2019easytl}. The methods may not perform well if the datasets have different set of attributes and labels~\cite{sukhija2019supervised}. The domains of such datasets often known as heterogeneous domains~\cite{sukhija2019supervised}. 
\begin{figure*}[ht!]
\centering
  \setlength{\belowcaptionskip}{0pt}
	\setlength{\abovecaptionskip}{0pt}	
	    \subfigure[Source domain: A toy dataset (Product) of a hypothetical multinational company ``ABC'']
	    {
	        \includegraphics[width=0.50\linewidth]{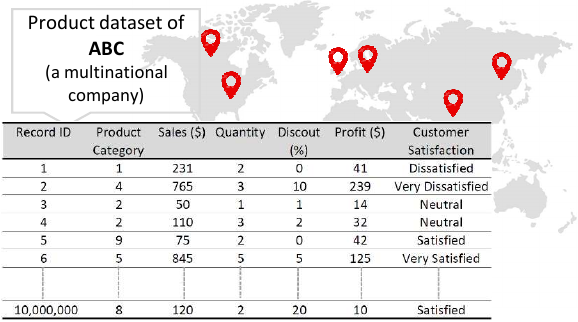}
	        \label{fig:source_toy}
	    }
	    \subfigure[Target domain: A toy dataset (Product) of a hypothetical Australian company ``XYZ'']
	    {
	        \includegraphics[width=0.45\linewidth]{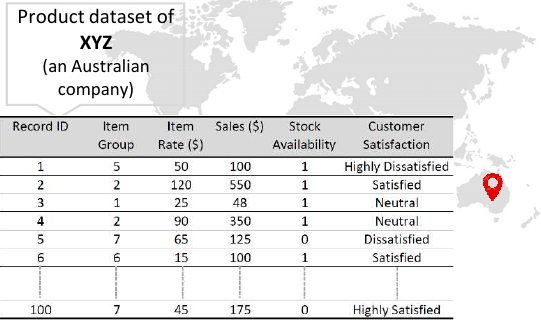}
	        \label{fig:target_toy}
	    }
	    \caption{Sample source and target domains using toy datasets.}
	    \label{fig:toy_domains}
\end{figure*}

A hypothetical heterogeneous setting is illustrated in Fig.~\ref{fig:toy_domains}, where Fig.~\ref{fig:toy_domains}a and Fig.~\ref{fig:toy_domains}b show two toy datasets related to products, sales and customer feedback information of a renowned multinational electronics company ``ABC'' and a newly established Australian superstore ``XYZ'', respectively. The datasets are heterogeneous in terms of attributes and labels. The ``XYZ'' company aims to evaluate the products based on the customer feedback. Since the dataset of ``XYZ'' company contains only few labeled records, it may not be possible to build a good model by applying an ML algorithm on the dataset. Besides, a good model for the ``XYZ'' company can be built by transferring records in a compatible way from the dataset of ``ABC'' company which has many records. However, traditional TL methods, such as ARTL~\cite{long2014adaptation} and MEDA~\cite{wang2020transfer} are not capable of transferring knowledge from source to target domain since the datasets have different sets of features and labels.   
\begin{figure*}[ht!]
\centering
  \setlength{\belowcaptionskip}{0pt}
	\setlength{\abovecaptionskip}{0pt}	
	\includegraphics[width=0.98\linewidth]{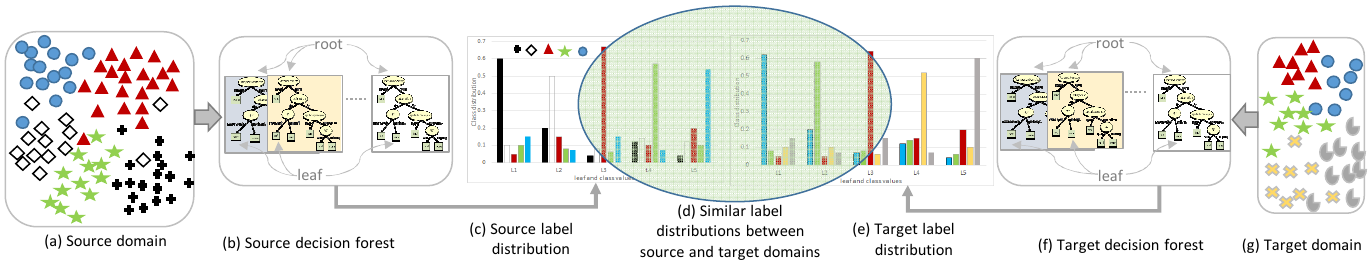}
	 \caption{Basic concept of bridging between source and target domains.}
	 \label{fig:basicconcept}
\end{figure*}

For dealing with such settings, a number of heterogeneous transfer learning (HeTL) methods have been proposed, and have recently received significant attention in the literature~\cite{fang2022semi, zhen2019deep, sukhija2019supervised, yao2019heterogeneous}. The methods either learn a domain-invariant space~\cite{zhen2019deep, ren2019heterogeneous} for representing the data of $D_s$ and $D_t$ or learn a mapping to project such data~\cite{fang2022semi, yao2019heterogeneous} or explore correspondences to bridge the heterogeneity gap between $D_s$ and $D_t$~\cite{sukhija2019supervised, chen2016transfer}. For clear understanding about the process of exploring the correspondences, we briefly discuss an existing method called SHDA~\cite{sukhija2019supervised} as follows. 

SHDA~\cite{sukhija2019supervised} deals with the HeTL setting by exploring correspondences and then transferring records of SDD to TDD. SHDA projects source records for TDD by identifying similar label distributions that act as the bridge between SDD and TDD. The concept of bridging the domains is illustrated in Fig~\ref{fig:basicconcept}. Using the SDD (Fig.~\ref{fig:basicconcept}a), SHDA builds a decision forest (in short, forest) which consists of a set of decision trees (in short, trees) as shown in Fig.~\ref{fig:basicconcept}b. A tree has a set of rules, where a rule is used to create a leaf containing records that satisfy the conditions of the rule. Using the records of a leaf, SHDA calculates label distributions of the leaf. The hypothetical label distributions of all leaves for the source domain are shown in Fig.~\ref{fig:basicconcept}c. Similarly, for the TDD (Fig.~\ref{fig:basicconcept}g), SHDA builds a forest (Fig.~\ref{fig:basicconcept}f) and then calculates the label distributions of all leaves (Fig.~\ref{fig:basicconcept}e). After that SHDA finds similar label distributions (see Fig.~\ref{fig:basicconcept}d) between the domains. The similar label distributions act as the pivots to bridge the domains and help achieving accurate transfer of data.

For both source and target forests, SHDA then calculates attribute contributions of the leaves that are associated with the pivots. The contribution of an attribute $A_j$ is calculated as $(\frac{1}{2})^l$, where $l$ is the level of the node where $A_j$ is contributed. As an example, for Leaf 3 of the sample tree (see Fig.~\ref{fig:sampletree}) the contribution of attributes ``Item Group'' and ``Sales'' are  0.625 (=0.500+0.125) and 0.250 respectively.
\begin{figure}[ht!]
\centering
  \setlength{\belowcaptionskip}{0pt}
	\setlength{\abovecaptionskip}{0pt}	
	\includegraphics[width=0.7\linewidth]{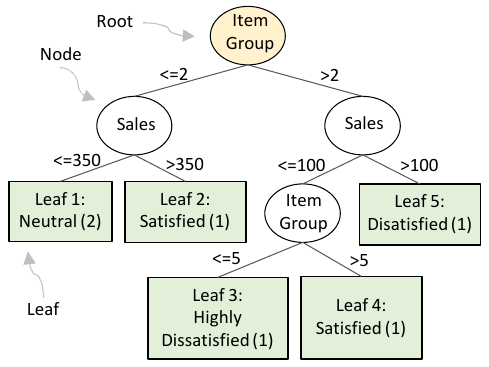}
	 \caption{A sample decision tree.}
	 \label{fig:sampletree}
\end{figure}

%For both source and target forests, SHDA then calculates attribute contributions of the leaves that are associated with the pivots (discussed in Section~\ref{relatedwork}). 
The attribute contributions are then used in Eq.~\eqref{shda_ps} to project the records of SDD into TDD. Combining the projected records with the TDD, SHDA builds the final model as the target model which is then used to classify the test data (discussed in Section~\ref{impact_targetdata}).

In HeTL, a good method should address two challenging issues~\cite{pan2010survey}: distribution divergence (see Definition~\ref{def_distributiondivergence}) and feature discrepancy (see Definition~\ref{def_featurediscrepancy}). However, existing methods often focus on one issue and leave the other one for the further work. For example, ARTL~\cite{long2014adaptation} and MEDA~\cite{wang2020transfer} focus on minimizing distribution divergence while SHDA~\cite{sukhija2019supervised} attends to learning shared label distributions.

ARTL and MEDA minimize the conditional distribution divergence by using maximum mean discrepancy (MMD)~\cite{pan2008transfer, liu2020learning} and marginal distribution divergence by exploiting intrinsic properties of manifold consistency~\cite{belkin2006manifold}. However, to identify the nearest neighbors of a record during manifold regularization, the methods use a user-defined parameter $k$ which may fail to exploit the properties accurately (explained later in Section~\ref{generalframework} and Fig.~\ref{fig:manifold_k}) resulting in a low classification accuracy. Moreover, the methods are unable to handle the feature discrepancy. 

The feature discrepancy is handled in SHDA~\cite{sukhija2019supervised} by identifying pivots (discussed earlier) between SDD and TDD, the distribution divergence, however, is not addressed. Moreover, it provides higher weights to parent nodes than their children while calculating attribute contributions. For a leaf, the importance of a test attribute in classifying the records belonging to the leaf does not necessarily depend on the position/level of the attribute in the rule. It also transfers all records from SDD to TDD and may result in negative transfer~\cite{pan2010survey}.

From the above discussion, we have the following observation. The classification accuracy of a method can be improved, if: (1) the issues related to distribution divergence feature discrepancy are addressed simultaneously, (2) the nearest neighbors in manifold regularization is determined automatically, (3) the centroid of a pivot (instead of the level of a node of a tree) is used in calculating attribute contributions, and (4) the negative transfer is reduced. 

\begin{figure*}[ht!]
\centering
  \setlength{\belowcaptionskip}{0pt}
	\setlength{\abovecaptionskip}{0pt}	
	\includegraphics[width=0.98\linewidth]{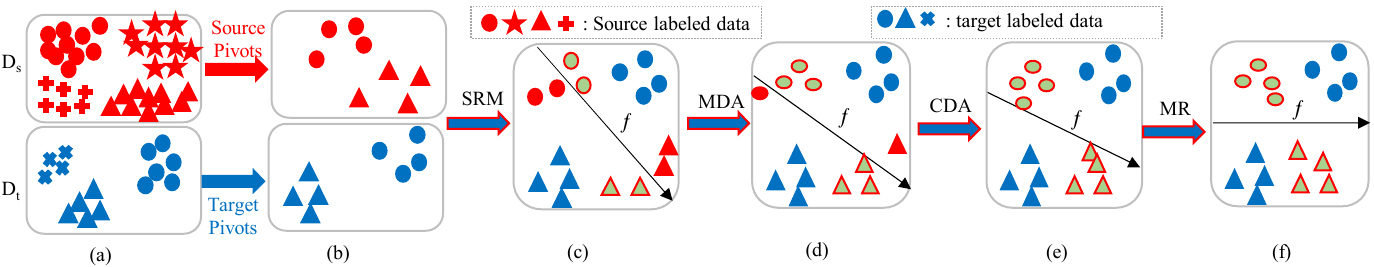}
	 \caption{Motivation of TLF where \textit{f} denotes hyperplane. (a) Original domains. (b) Source and target pivots. (c) After structural risk minimization (SRM). (d) After marginal distribution adaptation (MDA). (e) After conditional distribution adaptation (CDA). (f) After manifold regularization (MR).}
	 \label{fig:motivation}
\end{figure*}
The existing methods therefore have room for further improvement. From the above observation and discussion, we summarize the issues of TL and highlight our motivation in Fig.~\ref{fig:motivation}. Fig.~\ref{fig:motivation}a shows a source domain $D_s$ having huge labeled data and a target domain $D_t$ having a few labeled data. To minimize feature discrepancy, we build two forests from $D_s$ and $D_t$, separately and identify source and target pivots (discussed in Section~\ref{generalframework}) to bridge the gap between $D_s$ and $D_t$. Fig.~\ref{fig:motivation}b shows the centroids of the pivots which are combined into a single dataset $D_c$. Since our goal is to map the source data into target domain, we learn a projection matrix based on the target data. Using the pivots of $D_t$, we build a initial model (\textit{f}) by minimizing the structural risk function. Fig.~\ref{fig:motivation}c shows that the model cannot discriminate the pivots of $D_s$ correctly due to substantial distribution difference. Similar to ARTL and MEDA, we minimize the marginal distribution divergence (see Fig.~\ref{fig:motivation}d) and the conditional distribution divergence (see Fig.~\ref{fig:motivation}e) by using the MMD. It is worth mentioning that the importance of marginal distribution adaptation (MDA) and conditional distribution adaptation (CDA) is determined through an adaptive factor which is calculated dynamically (see Eq.~\eqref{eq_joint_mmd}). After MDA and CDA, the model can classify the pivots of $D_s$ more correctly. Finally, we maximize the manifold consistency by automatically exploiting its intrinsic properties (see Eq.~\eqref{eq_manifold1}) and obtain the perfect model as shown in Fig.~\ref{fig:motivation}f. Inspired by the above demonstration, we design a general framework by integrating all these learning objectives.

In this paper, we present a novel transfer learning framework called TLF that builds a classifier for TDD by transferring knowledge from SDD. TLF is capable of handling the two issues. The main contributions of TLF are summarized as follows.
\begin{itemize}
	\item We eliminate the feature discrepancy by identifying pivots between SDD and TDD, and then mapping the pivots into TDD. We argue that since the test attributes of a rule are equally important to classify the records belonging to a leaf (see Fig.~\ref{fig:sampletree}), it is sensible to use the leaf's centroid (see Eq.~\eqref{eq_centroid}) for representing the leaf with actual values instead of the level of the test attributes. 
	\item We minimize distribution divergence by using MMD and manifold simultaneously where the intrinsic property of the manifold consistency is exploited automatically (discussed in Section~\ref{generalframework}).
	\item We minimize negative transfer, during the projection, by considering only the source records that are belonging to the source pivots (discussed in Section~\ref{generalframework}).
	\item Since TLF produces trees (see Fig.~\ref{fig:sampletree}) as the final classifier, it can be useful for knowledge discovery purposes.
	\item We explore the necessity of transfer learning across domains.
	\item We explore the impact of target dataset size on TLF performances and find the minimum labeled data required for the TLF to build a good model for the target domain (see Section~\ref{impact_targetdata}) while ML methods fail to build a good model with the same dataset. 
	\item We carry out extensive experimentation to evaluate the effectiveness of TLF over the existing methods. 
\end{itemize}
We evaluate TLF on seven publicly available real datasets by comparing its performance with the performance of fourteen high quality existing techniques including two baseline algorithms. We also analyse the impact of target dataset size on the performances of TLF and identify three scenarios, namely  Minimun target dataset size (MTDS), Average target dataset size (ATDS), and Best target dataset size (BTDS). Our experimental results and statistical analyses indicate that TLF performs significantly better than the existing methods in all scenarios. 

The rest of the paper is organized as follows. Section~\ref{problemformulation} presents problem formulation and assumptions, and Section~\ref{relatedwork} presents a background study on transfer learning. Our proposed transfer learning framework is presented in Section~\ref{ourtechnique}. Section~\ref{experiments} presents experimental results, Section~\ref{Analysis_Challenging_Situations} presents the effectiveness of TLF in some challenging situations, and Section~\ref{conclusion} gives concluding remarks.
\section{Problem Formulation}
\label{problemformulation}

In this section, we define the problem setting, assumptions and goal for supervised heterogeneous transfer learning. The notations that we use frequently in this paper, are presented in Table~\ref{tab:notations}.
\begin{table}[ht!]
	\scriptsize
	\centering
	\setlength{\belowcaptionskip}{-10pt}
	\setlength{\abovecaptionskip}{-5pt}		
	\caption{Notations and their meanings}
	\renewcommand\tabcolsep{3pt} 
	\begin{tabular}{ll|ll}
	\toprule
		Notation&Meaning&Notation&Meaning\\
		\midrule
	$D_s$&Source domain&$D_t$&Target domain\\
	$n_s$&Size of $D_s$&$n_t$&Size of $D_t$\\
	$d_s$&Attributes of $D_s$&$d_t$&Attributes of $D_t$\\
	$X_s$&Records of $D_s$&$X_t$&Records of $D_t$\\
	$Y_s$&Labels of $D_s$&$Y_t$&Labels of $D_t$\\
	$F_s$&Forest on $D_s$&$F_t$&Forest on $D_t$\\
	$L_s$&Leaves of $F_s$&$L_t$&Leaves of $F_t$\\
	$w_i$&$i$th record&$r_i$&Label of $w_i$\\
	$C$&\texttt{\#} shared labels&\textbf{K}&Kernel matrix\\
	$\sigma$&Ridge regularization&$\mu$&MMD adaptive factor\\
	$\lambda$&MMD regularization&\textbf{M}&MMD matrix\\
	$\gamma$&Manifold regularization&\textbf{L}&Graph Laplacian matrix\\
	\bottomrule
	\end{tabular}
	\label{tab:notations}
\end{table}
\begin{definition}
\label{defdomain}
Domain~\cite{pan2010survey, long2014adaptation}: A domain $D$ is formed by a feature space $X$ having $d$ number of attributes and a marginal probability distribution $P(x)$, i.e., $D = {X, P(x)}$, where $x\in X$.
\end{definition}
\begin{definition}
\label{deftask}
Task~\cite{pan2010survey, long2014adaptation}: For a domain $D$, a task $T$ is formed by a set of labels $Y$ and a classification function $f(x)$, i.e., $T = \{Y, f(x)\}$, where $y\in Y$, and $f(x) = Q(y|x)$ can be considered as the conditional probability distribution.
\end{definition}

\begin{definition}
\label{def_distributiondivergence}
Distribution divergence: Let $D_s$ and $D_t$ be the source and target domains, respectively. Also, let $P_s(x_s)$ and $P_t(x_t)$ be the marginal distributions of $D_s$ and $D_t$, respectively, and $Q_s(y_s|x_s)$ and $Q_t(y_t|x_t)$ be the conditional distributions of $D_s$ and $D_t$, respectively. The distribution divergence can be defined with the setting where the marginal and conditional distributions of $D_s$ and $D_t$ are different i.e. $P_s(x_s) \neq P_t(x_t) \vee Q_s(y_s|x_s) \neq Q_t(y_t|x_t)$.
\end{definition}
\begin{definition}
\label{def_featurediscrepancy}
Feature discrepancy: Let $d_s$ and $d_t$ be the number of features (or attributes) of $D_s$ and $D_t$, respectively. The feature discrepancy means the dimensionality of $D_s$ and $D_t$ are different i.e. $d_s \neq d_t$. Moreover, the types of features of $D_s$ and $D_t$ are different.
\end{definition}

If the domains have different feature spaces or marginal distributions, i.e., $X_s \neq X_t \vee P_s(x_s) \neq P_t(x_t)$ then the domains can be considered as different i.e., $D_s \neq D_t$. Also, let $T_s$ and $T_t$ be the source task and target task, respectively. If the tasks have different labels or conditional distributions, i.e., $Y_s \neq Y_t \vee Q_s(y_s|x_s) \neq Q_t(y_t|x_t)$ then the tasks can be considered as different i.e., $T_s \neq T_t$.

\begin{definition}
\label{deftransferlearning}
Supervised Heterogeneous Transfer Learning (SHeTL): Given a source domain having labels i.e. $D_s = \left\{(x_{s_{1}}, y_{s_{1}}), . . . , (x_{s_{n_s}}, y_{s_{n_s}})\right\}$ and a target domain having labels i.e. $D_t = \left\{(x_{t_{1}}, y_{t_{1}}), . . . , (x_{t_{n_t}},y_{t_{n_t}})\right\}$, the goal of SHeTL is to build classifier $f$ on $D_t$ by learning a mapping $P:D_s \mapsto  D_t$, under the assumptions $n_s\gg n_t$, $X_s \cap  X_t=\emptyset$, $d_s \cap d_t=d$, $Y_s \cap Y_t=C$, $P_s(x_s) \neq P_t(x_t)$, and $Q_s(y_s|x_s) \neq Q_t(y_t|x_t)$.
\end{definition}

%Literature review section 
\section{Related Work}
\label{relatedwork}
A number of methods have been proposed for transfer learning (TL)~\cite{fang2022semi, yu2020label, zhen2019deep, kumar2020understanding, zhang2022transfer, zhang2021learning}. The methods that are most related to our proposed method TLF, are reviewed in this section. Based on feature discrepancy, TL methods can broadly be classified into 2 categories: Homogeneous TL (HoTL)  and Heterogeneous TL (HeTL).

\subsection{Homogeneous Transfer Learning (HoTL)}
\label{rw_hotl}
HoTL methods deal with the setting where (1) both source and target domains can be described by the same number and types of features, (2) the distributions of the domains are different~\cite{chen2016transfer}. Most of the methods generally build an initial model based on $D_s$ and then apply the model on $D_t$ to obtain pseudo-labels for the records of $D_t$. After that the methods iteratively minimize distribution divergence between $D_s$ and $D_t$, and update the pseudo-labels until convergence criteria are met~\cite{bruzzone2009domain, pan2011domain, kumar2020understanding, wang2020transfer}. 

An existing method called DASVM~\cite{bruzzone2009domain} initially builds a classifier by applying the support vector machine (SVM)~\cite{vapnik1998statistical} on $D_s$. At the first iteration, the records of $D_s$ are considered as the training set. The classifier is then used to estimate the labels of the records of $D_t$. The training set is then iteratively updated by adding $m$ records, that are classified with high confidence, from $D_t$ and removing $m$ records, that have less influence on the classifier, of $D_s$. In each iteration, the classifier is also retrained based on the current training set. Since DASVM ultimately builds the final classifier by considering only the records of $D_t$, a good classifier may not be built if $D_t$ contains insufficient records. Moreover, only the structural risk functional (SRF) is minimized in DASVM.  
 
In addition to SRF, Transfer Component Analysis (TCA)~\cite{pan2011domain} iteratively reduces the difference between joint distributions of $D_s$ and $D_t$ to ensure the domains are closer than the previous iteration. To measure the difference, TCA makes use of the maximum mean discrepancy (MMD)~\cite{pan2008transfer, liu2020learning}. TCA is improved in JDA~\cite{long2013transfer} which minimizes the difference of both conditional and marginal distributions.

Besides, ARTL~\cite{long2014adaptation} and MEDA~\cite{wang2020transfer} minimize the difference between joint distributions of $D_s$ and $D_t$ by using MMD, and maximize manifold consistency~\cite{belkin2006manifold} by considering the domains marginal distributions. However, a user requires to provide a value for $k$, to exploit the intrinsic property of the manifold consistency, which can be a difficult task of the user in real-world applications. 

Instead of adjusting the initial model, some other methods of this category first learn a domain-invariant space to represent the source and target domains, and then build a classifier on the domain-invariant space in order to classify test data~\cite{long2018transferable}. For example, DAN~\cite{long2018transferable} and JAN~\cite{long2017deep} apply deep neural networks to learn the domain-invariant space. Instead of a single common space, GFK~\cite{gong2014learning} finds a set of sub-spaces and then identifies the best sub-space by using the geodesic flow of the manifold consistency. 

However, due to feature discrepancy between $D_s$ and $D_t$, the above methods cannot be easily extended for solving such heterogeneous transfer learning tasks~\cite{chen2016transfer}. 
 
\subsection{Heterogeneous Transfer Learning (HeTL)}
\label{rw_hetl}
HeTL methods deal with the setting where (1) the dimensionality and features of source and target domains are different and disjoint, respectively, (2) the distributions of the domains are different~\cite{liu2020heterogeneous}. To alleviate the feature discrepancy and distribution divergence, the methods either learn a domain-invariant space~\cite{zhen2019deep, ren2019heterogeneous} for representing the data of $D_s$ and $D_t$ or learn a mapping to project such data~\cite{fang2022semi, yao2019heterogeneous} or explore correspondences to bridge the heterogeneity gap between $D_s$ and $D_t$~\cite{sukhija2019supervised, chen2016transfer}. 

\textit{To learn a common representation space}, DSCMR~\cite{zhen2019deep} and CLARINET~\cite{zhang2021learning} use a deep neural network based architecture for representing data of $D_s$ and $D_t$. The common representation space is used to bridge the heterogeneity gap between $D_s$ and $D_t$ and to compare the records from $D_s$ and $D_t$ directly by calculating similarities among the records. The similarity of the records having the same class label is higher than the similarity of the records having different class labels. Therefore, the class label of a test record can be predicted based on the similar records regardless of their feature types. Similarly, GLG~\cite{liu2020heterogeneous} utilizes the geodesic flow kernel to learn the common representation space.    

\textit{To learn a mapping}, CSFT~\cite{ren2019heterogeneous} finds a covariance-structured representation space by building two joint regression models: (i) $y^s_i=P^T\Phi(x^s_i)$ and (ii) $y^t_i=P^T\Phi(x^t_i)$. The regression models are then used to map the data of $D_s$ and $D_t$ into the shared space. After that CSFT builds a classifier by applying the SVM (or kNN) on the shared space and predicts the class value of a test record.     

A recent method called JMEA~\cite{fang2022semi} that makes use of a deep neural network to map the data of $D_s$ and $D_t$. The method assumes that $D_s$ contains huge labeled data and $D_t$ contains a few labeled and huge unlabeled data. It utilizes two networks for mapping: (1) from source to target, and (2) from labeled target to unlabeled target. Using the initial networks, the method obtains pseudo labels for the unlabeled data of $D_t$. The method takes the confidence of the pseudo labels into consideration to determine $k$ pseudo labels as the final labels of the unlabeled data. It then calculates the overall errors of the networks. The method iteratively updates the networks parameters and pseudo labels simultaneously until convergence criteria are met. Like JMEA, KHDA~\cite{fang2022semi} performs the mapping in two ways: from source to target and from labeled target to unlabeled target. However, KHDA considers a kernel matrix for the knowledge transfer instead of the neural network.

Similar to JMEA and KHDA, TOHAN~\cite{chi2021tohan} considers two deep networks simultaneously, where the first network transforms the data of $D_s$ into an intermediate domain and the second network focuses on learning from intermediate to target domain. Due to the transformation of data from $D_s$ into an intermediate domain, the privacy issue of the source data is solved.     

Another method called STN~\cite{yao2019heterogeneous} that builds an initial model based on the labeled data and then generates pseudo labels for the unlabeled data of $D_t$. It then iteratively minimizes the distribution divergence by using the MMD and regenerates the pseudo labels. The method repeats this process until the distribution divergence is minimized. 

Besides, SHOT~\cite{liang2020we} builds an initial model by applying a deep neural network on $D_s$. The model is then adjusted for $D_t$ without accessing the source data. The improvement SHOT is SHOT++~\cite{liang2021source} which calculates the confidence of predicting pseudo labels for the unlabeled data of $D_t$. The highly confident pseudo labels are considered in minimizing overall error and in updating parameters of the network . 
 
\textit{To explore correspondences}, TNT~\cite{chen2016transfer} uses a deep neural network to transform $D_s$ and $D_t$ into a shared space and then uses the neurons of the network to form a decision forest. Since the sigmoid function is used in each decision node of the forest, the split value of the node does not represent actual value (as shown in Fig.~\ref{fig:sampletree}). Thus, TNT may not be suitable for knowledge discovery purposes.

However, it is possible to discover patterns from a recent existing method called SHDA~\cite{sukhija2019supervised} which makes use of a forest algorithm on $D_s$ and $D_t$ to explore shared label information. Since our proposed framework, TLF is very relevant to SHDA we discuss it in detail as follows. 

SHDA first builds a forest (say 100 trees) by applying the Random Forest (RF)~\cite{breiman2001random} on $D_s$ and obtains a set of rules $R_s$ that produces $L_s$ leaves (see Fig.~\ref{fig:sampletree}). For every rule from the root to a leaf node, it calculates the contribution of each attribute of $D_s$. The contribution of an attribute $d_{s_{j}}\in d_s$ is calculated as $W_{s_{j}}= \sum^v_{i=1}{(\frac{1}{2})}^{v(i)}$  where $v$ is a list contains the level numbers at which the attribute $d_{s_{j}}$ is considered as the candidate split. Thus for all rules, it produces a contribution matrix $W_s^{[L_s\times d_s]}$. After that, for each leaf SHDA calculates the distributions of labels. If the size of the class attribute of $D_s$ is $C_s$ then SHDA produces a label distribution matrix, $Q_s^{[L_s\times C_s]}$. 

SHDA then refines $Q_s$ by removing the duplicate rows of the label distribution. For every duplicate row, the corresponding rows of the attributes contribution matrix $W_s$ are averaged. Similar to $D_s$, SHDA produces the attributes contribution matrix $W_t^{[L_t\times d_t]}$ and label distribution matrix, $Q_t^{[L_t\times C_t]}$ by building a forest from $D_t$, where $R_t$ is number of rules and $C_t$ is the size of the class attribute of $D_t$. The label distribution $L_t$ is also refined by removing the duplicate rows, and then the corresponding rows of the attributes contribution matrix $W_t$ are also averaged.

Using $Q_s$ and $Q_t$, SHDA identifies the identical or similar label distributions across $D_s$ and $D_t$. Let $N_p$ be the number of identical or similar label distributions across the source and target domains. These $N_p$ rows of $Q_s$ and $Q_t$ are called the pivots that act as the bridge between $D_s$ and $D_t$. The concept of bridging $D_s$ and $D_t$ is illustrated in Fig.~\ref{fig:basicconcept}.  

Since the attribute contribution matrices $W_s$ and $W_t$ are also refined, both of them have $N_p$ rows. The source projection matrix $P_s$ is obtained as~\cite{sukhija2019supervised}:
\begin{equation}
\begin{aligned}
\begin{array}{l l}
\displaystyle
\min_{P_s}{\frac{1}{N_p}} {\sum_{i=1}^{N_p} {\left\|W_t - W_s\cdot P_s\right\|_2^2}+\sum_{i=1}^{d_t} {\lambda \left\|P_{s_i}\right\|_1 }}& \\ 
	\text{such that}\quad P_{S_i}\geq 0 & 
\end{array}
\end{aligned}
\label{shda_ps}
\end{equation}
Using $P_s$, SHDA calculates the projected source data ($D_s\times P_s$) which are then appended to $D_t$ to build the final classifier. It is reported that SHDA outperforms existing state-of-the-art methods~\cite{sukhija2019supervised}. However, since SHDA projects the whole $D_s$ into $D_t$, the final classifier may suffer from negative transfer resulting in a low classification accuracy if the number of pivots between $D_s$ and $D_t$ is very low.

\subsection{Comparison with Existing Study}
\label{comparison_existing_study}

We now compare the proposed TLF with some existing studies. To the best of our knowledge, a number of existing methods are related to TLF, however, our work is significantly distinguished from them in the following ways:
\begin{itemize}
\item \textit{Comparison with homogeneous TL studies:} (1) DASVM~\cite{bruzzone2009domain} adjusts the classifier based on the pseudo labels and ultimately builds the final classifier based on the unlabeled target data. (2) ARTL~\cite{long2014adaptation}, MEDA~\cite{wang2020transfer}, JDA~\cite{long2013transfer} and BDA~\cite{wang2017balanced} utilize the pseudo labels to minimize the distribution divergence. (3) While TCA~\cite{pan2011domain} uses the MMD to align conditional distributions, GFK~\cite{gong2014learning} adopts the manifold consistency to align marginal distributions and does not take MMD into account. (4) To determine the nearest neighbors in manifold regularization, ARTL, MEDA and GFK require a user-defined $k$ which may be challenging for a user in real applications. 
\item \textit{Comparison with heterogeneous TL studies:}(1) DSCMR~\cite{zhen2019deep} and CLARINET~\cite{zhang2021learning} learn a common representation space. (2) JMEA~\cite{fang2022semi}, SHOT++~\cite{liang2021source} utilize the confidence of pseudo labels to minimize the loss function and iteratively perform classifier adaptation. (3) While TLF considers only the labeled target data in the training process, KHDA~\cite{fang2022semi} takes both labeled and unlabeled target data into account. KHDA simultaneously optimizes conditional and marginal distributions by using the MMD and Manifold. While KHDA gives equal importance to conditional and marginal distributions, TLF learns an adaptive factor dynamically to determine the importance of conditional and marginal distributions. Furthermore, KHDA requires a user-defined $k$ value for identifying nearest neighbors in manifold regularization. Furthermore, we argue that due to the use of pseudo-labels in the training process, KHDA may not be able to build a good model and may not perform well. Finally, while the classifier built by TLF can be used for knowledge discovery purposes, KHDA may not be suitable for this task since it does not generate any rules. (4) TNT~\cite{chen2016transfer} uses neurons of a neural network to construct a forest where the decision of a node is determined by the sigmoid function. However, TNT may not be suitable in discovering interesting patterns for knowledge discovery purposes. (5) SHDA~\cite{sukhija2019supervised} provides different importance to the different level nodes, whereas all decision nodes in a logic rule contribute equally for making the decision. During the transformation, it does not take the MMD and manifold into account to alleviate distribution divergence. Moreover, for a low number of correspondences, SHDA may not perform well since it may suffer from negative transfer due to mapping the entire source data into target domain.  
\end{itemize}
 
The existing methods therefore have room for further improvement. TLF, is capable of transferring knowledge from $D_s$ and $D_t$ by using dynamic distribution adaptation (DDA) and manifold regularization (MR) even if $d_s\neq d_t$, $Y_s\neq Y_t$ and $n_s\gg n_t$. A comparison on the properties of TLF and some existing methods is presented in Table~\ref{tab:model_comparison}.
\begin{table}[ht!]
	\scriptsize
	\centering
	\setlength{\belowcaptionskip}{-10pt}
	\setlength{\abovecaptionskip}{-5pt}	
	\caption{Comparison of TLF and some existing methods, where DDA, MR, and PL denote dynamic distribution adaptation, manifold regularization, and pseudo labels respectively.}
	\renewcommand\tabcolsep{3pt} 
	\begin{tabular}{lcccccc}
	\toprule
		Method [Ref.]&DDA?&MR?&$d_s\neq d_t$?&$Y_s\neq Y_t$?&$n_s\gg n_t$?&PL?\\
		\midrule
		DASVM~\cite{bruzzone2009domain}&No&No&No&No&No&Yes\\
	ARTL~\cite{long2014adaptation}&Yes&Yes&No&No&No&Yes\\
	MEDA~\cite{wang2020transfer}&Yes&Yes&No&No&No&Yes\\
	JDA~\cite{long2013transfer}&Yes&Yes&No&No&No&Yes\\
	BDA~\cite{wang2017balanced}&Yes&Yes&No&No&No&Yes\\
	TCA~\cite{pan2011domain}&Yes&No&No&No&No&Yes\\
	TOHAN~\cite{chi2021tohan}&No&No&No&No&No&Yes\\
	GLG~\cite{liu2020heterogeneous}&No&Yes&No&No&No&No\\
	KHDA~\cite{fang2022semi}&Yes&Yes&No&No&No&Yes\\
	SHOT++~\cite{liang2021source}&Yes&No&No&No&No&Yes\\
	TNT~\cite{chen2016transfer}&No&No&No&No&No&Yes\\
	SHDA~\cite{sukhija2019supervised}&No&No&Yes&Yes&Yes&No\\
	\textbf{TLF}~[ours]&Yes&Yes&Yes&Yes&Yes&No\\
	\bottomrule
	\end{tabular}
	\label{tab:model_comparison}
\end{table}
\section{Proposed Framework TLF}
\label{ourtechnique}

The motivation of TLF is presented in the introduction section. We now present our proposed a novel framework called TLF for supervised heterogeneous transfer learning (SHeTL). We then analyze the target domain error bound and computational complexity of TLF.

\subsection{General Framework}
\label{generalframework}

TLF consists of five main steps as follows (also shown in Algorithm~\ref{algo_tlf}).
\begin{figure*}[ht!]
\centering
  \setlength{\belowcaptionskip}{0pt}
	\setlength{\abovecaptionskip}{0pt}	
	\includegraphics[width=0.98\linewidth]{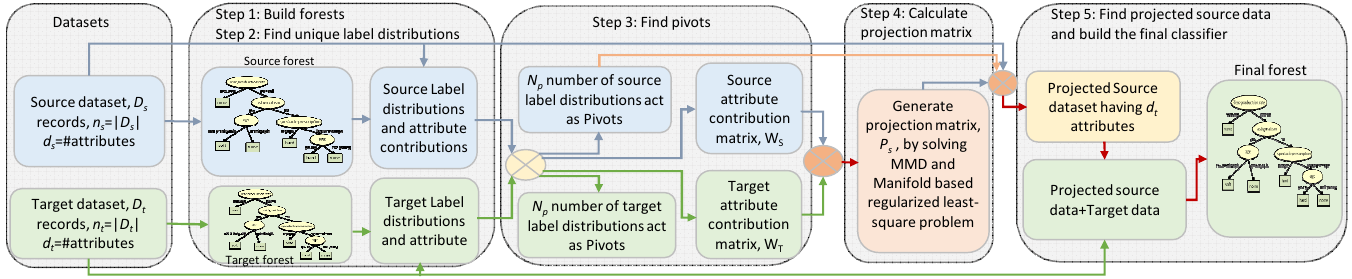}
	 \caption{Overall flow diagram of our proposed transfer learning framework, TLF.}
	 \label{fig:blockdiagram}
\end{figure*}

{\small  
\begingroup
\leftskip2em\rightskip2em
\setlength{\parindent}{0cm}
\vspace{.5em}
\hangindent=1cm
\textbf{Step-1:} Build forests and find label distributions and centroids from both source $D_s$ and target $D_t$ domains.

\hangindent=1cm
\textbf{Step-2:} Remove duplicate label distributions and find the aggregated values of the corresponding centroids. 

\hangindent=1cm
\textbf{Step-3:} Find pivots between source and target domains.

\hangindent=1cm
\textbf{Step-4:} Calculate projection matrix between source and target domains.

\hangindent=1cm
\textbf{Step-5:} Find and transfer projected source data to the target domain and build the final classifier.

\endgroup
}
We also present an overall flow diagram of TLF as shown in Fig.~\ref{fig:blockdiagram}. We now discuss the steps as follows.

\textbf{Step-1: Build forests and find label distributions and centroids from both source $D_s$ and target $D_t$ domains.}

In TLF, we take two datasets, namely source domain dataset $D_s$ and target dataset $D_t$ as input. $D_s$ has $n_s$ rows, $d_s$ attributes and $C_s$ labels, and $D_t$ has $n_t$ rows, $d_t$ attributes and $C_t$ labels. In this step, we build two forests $F_s$ and $F_t$ (where each forest has $\tau$ trees) by applying a decision forest algorithm such as Random Forest (RF)~\cite{breiman2001random} on $D_s$ and $D_t$, respectively as shown in the Step 1 of Algorithm~\ref{algo_tlf} and Fig.~\ref{fig:blockdiagram}. Using $F_s$, we find the leaves $L_s$ where a leaf $l^i_s\in L_s$ is represented by the $i$th rule of $F_s$. For the $i$th leaf $l^i_s$, we find records (from $D_s$) that belong to $l^i_s$. We then calculate the label distribution $v^i_s$, the centroid $w^i_s$ and centroid's label $r^i_s\in C_s$. The $v^{ij}_s$ is the probability of the $j$th label in $l^i_s$. If the $j$th attribute is a numerical attribute then the value of $w^{ij}_s$ is calculated as follows.
\begin{equation}
\begin{aligned}
	w^{ij}_s=u^{ij}_s+log(std)
\end{aligned}
\label{eq_centroid}
\end{equation}
where $u^{ij}_s$ and $std$ are the $j$th attribute mean and standard deviation, respectively, of the records that belong to leaf $l^i_s$. For categorical attributes, we use the mode values.  Thus, for all leaves of $F_s$ we obtain a label distribution matrix $V^{[L_s\times C_s]}_s$, a centroid matrix $W^{[L_s\times d_s]}_s$ and a matrix for centroids label $R^{[L_s\times 1]}_s$. Similarly, using $D_t$ and $F_t$, we find a label distribution matrix $V^{[L_t\times C_t]}_t$, a centroid matrix $W^{[L_t\times d_t]}_t$ and a matrix for centroids label $R^{[L_t\times 1]}_t$. The centroids are considered as the attribute contributions which are calculated differently than SHDA~\cite{sukhija2019supervised}. 

\textbf{Step-2: Remove duplicate label distributions and find the aggregated values of the corresponding centroids.}

Since label distribution of a leaf is calculated based on the number of records, of each label, belonging to the leaf, the existence of  duplicate label distributions at the leaves is common. In this step, we remove duplicates from $V_s$ and $V_t$ as shown in Step 2 of Algorithm~\ref{algo_tlf}. For every duplicate entry in $V_s$ and $V_t$, the corresponding centroids in $W_s$ and $W_t$ are aggregated. For numerical and categorical attributes we take the average and mode values, respectively. This procedure helps to keep attribute contributions for the aggregated vectors. Let $L^{\prime}_s$ and $L^{\prime}_t$ be the number of entries remained after the aggregation. Now, the matrices are:  $V^{[{L^{\prime}_s}\times {C_s}]}_s$, $V^{[{L^{\prime}_t}\times {C_t}]}_t$, $W^{[{L^{\prime}_s}\times {d_s}]}_s$, $W^{[{L^{\prime}_t}\times {d_t}]}_t$, $R^{[{L^{\prime}_s}\times 1]}_s$ and $R^{[{L^{\prime}_t}\times 1]}_t$.  

\textbf{Step-3: Find pivots between source and target domains.}

In most real-world HeTL scenarios, the feature discrepancy is a challenging issue since the correspondences between $D_s$ and $D_t$ are not available. To minimize the discrepancy, some common information between $D_s$ and $D_t$ can be identified to bridge the domains. Since we assume that both $D_s$ and $D_t$ share some label space, we exploit the shared label space to identify correspondences between domains. Thus, in Step 3, we determine the leaves with similar label distributions across the domains as shown in Step 3 of Algorithm~\ref{algo_tlf} and Fig.~\ref{fig:blockdiagram}. 

Let $S^{L^{\prime}_s\times L^{\prime}_t}$ be the similarity matrix of $V^{[{L^{\prime}_s}\times {d_s}]}_s$ and $V^{[{L^{\prime}_t}\times {d_t}]}_t$, where each value varies between 0 and 1. A lower value indicates a higher similarity. The similarity of two label distributions is computed using the Jensen Shannon divergence (JSD) method~\cite{saha2016multiple}. A source label distribution $V^i_s$ is said to be similar to a target label distribution $V^k_t$ if the divergence between them is less than 10\%. 

Since $D_t$ generally contains small labeled data, it is common to have a small number of common labels. So, we my have only a small number of similar labeled distributions across the domains. Each shared labeled distribution (i.e. $(V^i_s, V^k_t)$) is considered as a pivot that acts as the bridge between $D_s$ and $D_t$. Let $N_p$ be the number of pivots between $D_s$ and $D_t$.

For each pivotal label distribution, we also have the attribute contribution which is calculated in Step 1 and refined in Step 2. Based on the similar source and target label distributions, we then determine the attribute contribution matrices $W^{[N_p\times d_s]}_s$ and $W^{[N_p\times d_t]}_t$, and the associated label matrices $R^{[N_p\times 1]}_s$ and $R^{[N_p\times 1]}_t$. 
%that are mapped to obtain the source projection matrix $P_s$

\textbf{Step-4: Calculate projection matrix between source and target domains.}

In this step, we calculate a projection matrix $P_s$ for $D_s$ so that the data can be projected to $D_t$ as shown in Step 4 of Algorithm~\ref{algo_tlf} and Fig.~\ref{fig:blockdiagram}. We calculate $P_s$ by applying structural risk minimization (SRM) principle and regularization theory~\cite{vapnik1998statistical} on the attribute contribution matrices $W^{[N_p\times d_s]}_s$ and $W^{[N_p\times d_t]}_t$. Let $J_s$ and $J_t$ be the source and target joint probability distributions, respectively and $P_s$ and $P_t$ be the source and target marginal distributions, respectively. To calculate $P_s$ we particularly, aim to optimize the following three objective functions:
\begin{enumerate}
	\item To minimize structural risk function on $W_s$ and $W_t$;
	\item To minimize distribution discrepancy between $J_s$ and $J_t$; and
	\item To maximize manifold consistency underlying $P_s$ and $P_t$.
\end{enumerate}
Let $D_c$ be the combined dataset of $W_s$, $R_s$, $W_t$ and $R_t$ i.e. $D_c = \left\{(w_{s_{1}}, r_{s_{1}}), . . . , (w_{s_{N_p}},r_{s_{N_p}}), (w_{t_{1}},r_{t_{1}}), . . . , (w_{t_{N_p}},r_{t_{N_p}})\right\}$. By ignoring source and destination, we can simply write $D_c$ as $\left\{(w_1, r_1), . . . , (w_z,r_z)\right\}$, where $z=2N_p$. Also let $f =\beta^T\phi(w)$ be the classifier having parameters $w$ and feature mapping function $\phi:w \mapsto \mathcal{H}$ that projects the actual feature vector to a Hilbert space $\mathcal{H}$. Using the SRM principle and regularization theory, the framework of TLF can be formulated as:
{\small
\begin{equation}
\begin{aligned}
	f=\argmin_{f \in \mathcal{H_K}} \sum^{z}_{i=1}{l(f(w_i),r_i)}+\sigma\left\|f\right\|^{2}_{K}\\+\lambda D_{f,K}(J_s,J_t)+\gamma M_{f,K}(P_s,P_t)
\end{aligned}
\label{eq_generalframework}
\end{equation}
}
where $K$ is the kernel function represented by $\phi$ such that $\left\langle\phi (w_i), \phi (w_i)\right\rangle = K (w_i, w_j)$, and $l(.,.)$, $\left\|f\right\|^{2}_{K}$ and $D_{f,K}(.,.)$ represent loss function, ridge regularization and dynamic distribution adaptation (DDA), respectively. Moreover, we introduce $M_{f,K}(P_s,P_t)$ as a Laplacian regularization to further exploit intrinsic properties of nearest points in Manifold $\mathcal{G}$~\cite{belkin2006manifold}. The symbols $\sigma$, $\lambda$, and $\gamma$ are positive regularization parameters for ridge, DDA, and Manifold, respectively. 

In Eq.~\eqref{eq_generalframework}, the loss function $l(.)$ measures the fitness of $f$ for classifying labels on $D_c$. We use SVM~\cite{vapnik1998statistical} to solve $l(.)$. Moreover, since the kernel mapping $\phi:w \mapsto \mathcal{H}$ may have infinite dimensions, we solve Eq.~\eqref{eq_generalframework} by reformulating it with the following Representer Theorem. 
\begin{theorem}
(Representer Theorem)~\cite{long2014adaptation, vapnik1998statistical} If $K$ is a kernel function induced by $\phi$ then the optimization problem in Eq.~\eqref{eq_generalframework} for $D_c$, can be represented by
{\small
\begin{equation}
\begin{aligned}
 f(w)=\sum^{z}_{i=1}{\alpha_i{K(w_i,w)}} \text{~and~} \beta=\sum^{z}_{i=1}{\alpha_i{\phi(w_i)}}
\end{aligned}
\label{eq1_representertheorem}
\end{equation}
}
where $\alpha_i$ is a coefficient.
\label{theorem_representer}
\end{theorem}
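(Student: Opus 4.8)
The plan is to show that any minimizer $f \in \mathcal{H}_K$ of the functional in Eq.~\eqref{eq_generalframework} must lie in the finite-dimensional subspace spanned by $\{K(w_i,\cdot)\}_{i=1}^{z}$, where $w_1,\dots,w_z$ are the points of the combined dataset $D_c$. The key observation is that every term of the objective --- the empirical loss $\sum_i l(f(w_i),r_i)$, the DDA term $D_{f,K}(J_s,J_t)$, and the manifold term $M_{f,K}(P_s,P_t)$ --- depends on $f$ only through its values $f(w_1),\dots,f(w_z)$ on the sample, whereas the regularizer $\sigma\|f\|_K^2$ depends on the full norm. Indeed, the MMD-based DDA discrepancy and the Laplacian regularization are both built from differences and weighted sums of the evaluations $f(w_i)$ (through the MMD matrix $\mathbf{M}$ and the graph Laplacian $\mathbf{L}$ assembled on the $z$ centroids), so the only part of $f$ that is ``free'' away from the sample is penalized and nothing else.

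First I would invoke the reproducing property of $\mathcal{H}_K$: for each $i$, $f(w_i) = \langle f, K(w_i,\cdot)\rangle_K$. Let $\mathcal{S} = \mathrm{span}\{K(w_i,\cdot) : 1\le i \le z\}$, a closed (finite-dimensional) subspace of $\mathcal{H}_K$, and write the orthogonal decomposition $f = f_{\parallel} + f_{\perp}$ with $f_{\parallel}\in\mathcal{S}$ and $f_{\perp}\perp\mathcal{S}$. Then for every $i$, $f(w_i) = \langle f_{\parallel}+f_{\perp}, K(w_i,\cdot)\rangle_K = \langle f_{\parallel}, K(w_i,\cdot)\rangle_K = f_{\parallel}(w_i)$, since $f_{\perp}$ is orthogonal to each $K(w_i,\cdot)$. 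Hence $f$ and $f_{\parallel}$ produce identical values on the whole sample, so the loss term, the DDA term, and the manifold term take exactly the same value for $f$ and for $f_{\parallel}$. For the regularizer, Pythagoras gives $\|f\|_K^2 = \|f_{\parallel}\|_K^2 + \|f_{\perp}\|_K^2 \ge \|f_{\parallel}\|_K^2$, with equality iff $f_{\perp}=0$. Therefore the objective evaluated at $f_{\parallel}$ is no larger than at $f$, and strictly smaller unless $f_{\perp}=0$; any minimizer must satisfy $f_{\perp}=0$, i.e.\ $f = \sum_{i=1}^{z}\alpha_i K(w_i,\cdot)$ for some coefficients $\alpha_i$. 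Evaluating this expansion at an arbitrary $w$ gives $f(w)=\sum_{i=1}^{z}\alpha_i K(w_i,w)$, and substituting $K(w_i,\cdot)=\phi(w_i)$ via the feature map (using $\langle\phi(w_i),\phi(w_j)\rangle=K(w_i,w_j)$) yields $\beta=\sum_{i=1}^{z}\alpha_i\phi(w_i)$, which is Eq.~\eqref{eq1_representertheorem}.

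The main obstacle, and the only place where care is genuinely needed, is verifying that the DDA functional $D_{f,K}(J_s,J_t)$ and the manifold functional $M_{f,K}(P_s,P_t)$ really are functions of the sample evaluations alone. For the MMD-type term this is standard once one writes the empirical MMD as a quadratic form $f^{\top}\mathbf{M}f$ in the evaluation vector $f = (f(w_1),\dots,f(w_z))^{\top}$ (with $\mathbf{M}$ the MMD matrix of Table~\ref{tab:notations}); for the manifold term one writes it as $f^{\top}\mathbf{L}f$ with $\mathbf{L}$ the graph Laplacian built on the $z$ centroids. Both constructions only reference $f$ at the data points, so the argument above applies verbatim. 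I would also note in passing that the theorem as used here does not require the loss $l$ to be convex --- only that it be pointwise in $f(w_i)$ --- so the statement covers the SVM hinge loss employed in TLF without modification; convexity would only be needed later to guarantee that the resulting finite-dimensional problem in $\alpha$ has a unique solution.
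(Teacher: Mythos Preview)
Your argument is the standard orthogonal-decomposition proof of the Representer Theorem and is correct; in particular, your check that the DDA term $D_{f,K}(J_s,J_t)$ and the manifold term $M_{f,K}(P_s,P_t)$ depend on $f$ only through the evaluations $f(w_1),\dots,f(w_z)$ is exactly what is needed to make the classical argument go through for this augmented objective.

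However, there is nothing to compare against: the paper does not supply its own proof of this theorem. Theorem~\ref{theorem_representer} is stated with citations to \cite{long2014adaptation,vapnik1998statistical} and then immediately used to pass to the finite-dimensional formulation in Eq.~\eqref{eq_srm_minimize}; the only theorem the paper actually proves is Theorem~\ref{theorem_target_Error_bound} (the generalization bound). So your proposal is not an alternative route to the paper's proof --- it is simply a full proof where the paper offers none. If anything, your write-up is more careful than the cited sources typically are about verifying that the extra regularizers (MMD and Laplacian) are sample-evaluation functionals, which is the one genuinely nontrivial point when extending the classical statement to objectives of the form in Eq.~\eqref{eq_generalframework}.
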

Using the SRM principle and equations~\eqref{eq1_representertheorem}, we can reformulate loss function, ridge regularization of Eq.~\eqref{eq_generalframework} as follows.
{\small
\begin{equation}
\begin{aligned}
	f=\argmin_{f\in\mathcal{H}} \left\|R-\alpha^TK\right\|^{2}_{F}+\sigma(tr(\alpha^T{K}\alpha))
\end{aligned}
\label{eq_srm_minimize}
\end{equation}
}
We now discuss the process of minimizing distribution divergence through dynamic distribution adaptation and manifold regularization as follows.

\textbf{Dynamic Distribution Adaptation (DDA):} Since the joint distributions $J_s$ and $J_t$ are different, the loss function $l(.)$ with only the ridge regularization in Eq.~\eqref{eq_generalframework} may not generalize well to $W_t$~\cite{long2014adaptation}. Thus, we aim to minimize joint distribution divergence 1) between the marginal distributions $P_s$ and $P_t$, and 2) between the conditional distributions $Q_s$ and $Q_t$, simultaneously. 

Following ARTL~\cite{long2014adaptation} and MEDA~\cite{wang2020transfer}, we adopt MMD~\cite{pan2008transfer, liu2020learning} to handle the distribution divergence. Based on the samples drawn from two domains, MMD is capable of determining whether two distributions are different or not. The MMD between marginal distributions $P_s$ and $P_t$ is computed as 
{\scriptsize
\begin{equation}
\begin{aligned}
	MMD^2_{\mathcal{H}}(W_s,W_t)= \left\|\frac{1}{N_p}\sum^{N_p}_{i=1}{\phi(w_i)}-\frac{1}{N_p}\sum^{z}_{j=N_p}{\phi(w_j)}\right\|^{2}_{\mathcal{H}}
\end{aligned}
\label{eq_marginal_mmd1}
\end{equation}
}
where $\phi:w \mapsto \mathcal{H}$ is a feature mapping. The MMD can be regularized for the classifier $f$ as
{\scriptsize
\begin{equation}
\begin{aligned}
	D_{f,K}(P_s,P_t)=\left\|\frac{1}{N_p}\sum^{N_p}_{i=1}{f(w_i)}-\frac{1}{N_p}\sum^{z}_{j=N_p}{f(w_j)}\right\|^{2}_{\mathcal{H}}
\end{aligned}
\label{eq_marginal_mmd2}
\end{equation}
}
where $f =\beta^T\phi(w)$.

Since we have labeled data in both domains, we now minimize the distance $D_{f,K}(P_s,P_t)$ between conditional distributions $Q_s$ and $Q_t$. Based on the intra-class centroids of two distributions $Q_s(w_s|r_s)$ and $Q_t(w_t|r_t)$, the MMD of a class $c\in C$ is computed as 
{\scriptsize
\begin{equation}
\begin{aligned}
	D^{(c)}_{f,K}(Q_s,Q_t)=\left\|\frac{1}{|W^{(c)}_s|}\sum_{w_i\in W^{(c)}_s}{f(w_i)}-\frac{1}{|W^{(c)}_t|}\sum_{w_j\in W^{(c)}_t}{f(w_j)}\right\|^{2}_{\mathcal{H}}
\end{aligned}
\label{eq_conditional_mmd}
\end{equation}
}
where $W^{(c)}_s=\left\{w_i: w_i\in W_s \wedge r(w_i)=c\right\}$ is the set records associated to class $c$ in $D_s$ and $W^{(c)}_t=\left\{w_j: w_j\in W_t \wedge r(w_j)=c\right\}$ is the set records associated to class $c$ in $D_t$. If $|W^{(c)}_k|=0$ then we ignore the part of $|W_k|$ in Eq.~\eqref{eq_conditional_mmd}.

In real-world scenarios, the importance of marginal ($P$) and conditional ($Q$) are different~\cite{wang2020transfer}. Thus, the divergence for joint joint distributions $J_s$ and $J_t$ is adapted dynamically by integrating Eq.~\eqref{eq_marginal_mmd2} and Eq.~\eqref{eq_conditional_mmd} as follows.

{\scriptsize
\begin{equation}
\begin{aligned}
	D_{f,K}(J_s,J_t)=(1-\mu)D_{f,K}(P_s,P_t)+\mu\sum^{C}_{c=1}{D^{(c)}_{f,K}(Q_s,Q_t)}
\end{aligned}
\label{eq_joint_mmd}
\end{equation}
}
where $\mu\in[0,1]$ is the adaptive factor which is calculated dynamically by following MEDA~\cite{wang2020transfer}. $\mu\rightarrow 0$ indicates the distribution distance between the domains is large, and thus, the marginal distribution adaptation is important. Besides, $\mu\rightarrow 1$ indicates the class distribution is dominant than feature distribution, and thus, the conditional distribution adaptation is important. 

Using the Representer Theorem (see Theorem~\ref{theorem_representer} and Eq.~\eqref{eq1_representertheorem}), Eq.~\eqref{eq_joint_mmd} becomes
{\scriptsize
\begin{equation}
\begin{aligned}
D_{f,K}(J_s,J_t)=tr(\alpha^T{K}{M}{K}\alpha)
\end{aligned}
\label{eq_joint_mmd2}
\end{equation}
} 
where $M^{[z\times z]}=(1-\mu)M_0+\mu\sum^{C}_{c=1}{M_c}$ is the MMD matrix. The components of the MMD matrix are calculated as follows.
{\small
\begin{equation}
\begin{aligned}
  (M_0)_{ij} = \left\{
  \begin{array}{l l}
    \frac{1}{N^2_p} & \text{$w_i, w_j\in{W_s} \vee w_i, w_j\in{W_t}$}\\		
		\frac{-1}{N^2_p} & \text{otherwise}\\
  \end{array} \right.
  ;\forall {i,j}
\end{aligned}
\label{eq_mmd_matrix_m0}
\end{equation}
} 
{\small
\begin{equation}
\begin{aligned}
  (M_c)_{ij} = \left\{
  \begin{array}{l l}
    \frac{1}{n^2_c} & \text{$w_i, w_j\in{W^{(c)}_s}$}\\	
		\frac{1}{m^2_c} & \text{$w_i, w_j\in{W^{(c)}_t}$}\\
		\frac{-1}{{n^2_c}{m^2_c}} & \text{$w_i, w_j\in{W^{(c)}_s} \vee w_i, w_j\in{W^{(c)}_t}$}\\	
		0 & \text{otherwise}\\
  \end{array} \right.
	;\forall {i,j}
\end{aligned}
\label{eq_mmd_matrix_mc}
\end{equation}
} 
where $n_c=|W^{(c)}_s|$ and $m_c=|W^{(c)}_t|$.

\textbf{Manifold Regularization:} In addition to DDA, we further exploit intrinsic properties of the marginal distributions $P_s$ and $P_t$ by manifold regularization~\cite{belkin2006manifold} due to its effectiveness in mitigating influence of feature distortion~\cite{long2014adaptation, wang2020transfer}. The features of a domain can be transformed into manifold space which is suitable for exploiting more detailed property and structure of the domain. Considering the manifold assumption~\cite{belkin2006manifold}, the manifold regularization is computed as
\begin{equation}
\begin{aligned}
\begin{array}{l l}
\displaystyle
	M_{f,K}(P_s,P_t)&=\sum^{z}_{i,j=1}{{(f(w_i)- f(w_j))}^2{B_{ij}}} \\
	&=\sum^{z}_{i,j=1}{{f(w_i)}{L_{ij}}{f(w_j)}}
\end{array}	
\end{aligned}
\label{eq_manifold1}
\end{equation}
where $L^{[z\times z]}$ is the normalized Laplacian regularization matrix and $B$ is the graph affinity matrix which is calculated as

{\small
\begin{equation}
\begin{aligned}
  B_{ij} = \left\{
  \begin{array}{l l}
    \cos(w_i, w_j) & \text{if $w_i\in{k(w_j)} \vee w_j\in{k(w_i)}$}\\	
		0 & \text{otherwise}\\
  \end{array} \right.
\end{aligned}
\label{eq_manifold_sim_matrix}
\end{equation}
} 
where $\cos(w_i, w_j)=\frac{w_i.w_j}{\|w_i\|.\|w_j\|}$ is a similarity function which is used to calculate distance between two records $w_i$ and $w_j$. It is worth noting that any similarity functions such as guassian, laplacian and cosine, can be incorporated in TLF. We empirically test the effectiveness of TLF, by using three kernels separately, on a real-world dataset Reuters-215782 (RT)~\cite{transferlearningRsc} which has 3 cross-domain pairs as follows: orgs$\mapsto$people, orgs$\mapsto$places and people$\mapsto$places (see Section~\ref{datapreparation} for details). For each cross-domain pair, we separately use guassian, laplacian and cosine kernels in Eq.~\eqref{eq_manifold_sim_matrix} and calculate the corresponding classification accuracy. We present the results in Fig.~\ref{fig:kernels_comparison} which shows that TLF performs the best when cosine kernel is used in Eq.~\eqref{eq_manifold_sim_matrix}. Therefore, we use cosine kernel in TLF to calculate distance between two records $w_i$ and $w_j$.
\begin{figure}[ht!]
\centering
  \setlength{\belowcaptionskip}{0pt}
	\setlength{\abovecaptionskip}{0pt}	
    \includegraphics[width=0.90\linewidth]{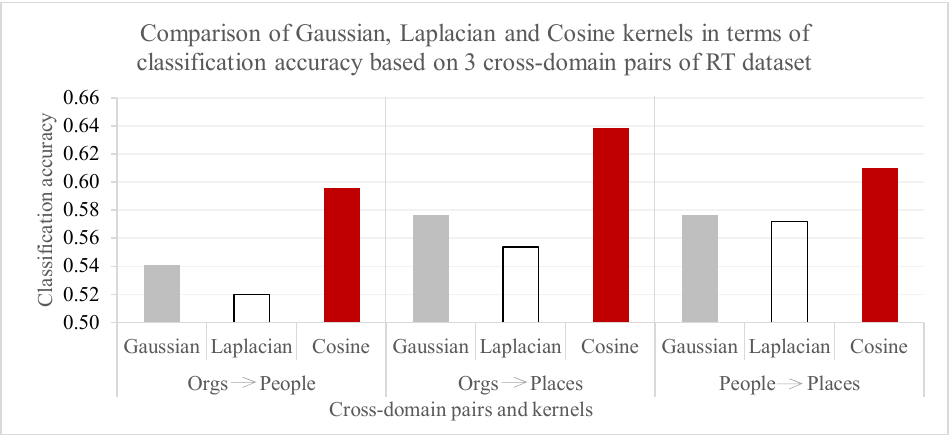}
   \caption{Comparison of Gaussian, Laplacian and Consine kernels in terms of classification accuracy for ``Orgs$\mapsto$People'', ``Orgs$\mapsto$places'', and ``people$\mapsto$places'' cross-domain pairs on RT dataset.}
	    \label{fig:kernels_comparison}
\end{figure}

In Eq.~\eqref{eq_manifold_sim_matrix}, $k(w_j)$ represents the $k$ nearest-neighbors (kNN) of the record $w_j$. While existing methods including ARTL~\cite{long2014adaptation} and MEDA~\cite{wang2020transfer} use a user-defined value for $k$, we propose a novel approach to determine $k$ automatically so that the intrinsic properties of a record can be exploited more accurately resulting in a high transfer performance.

In TLF, we determine $k$ NN of a record $w_i$ having label $r_i$ as follows. We first calculate distance between $w_i$ and other records one by one and then sort the records based on the distance in ascending order. We set the minimum of value of $k$ to 4. That is, the first 
four records are added to kNN set. Following the order, we continue to add a record $w_u$ to kNN set iteratively until $r_i\neq{r_u}; \forall u$. Fig.~\ref{fig:manifold_k} illustrates the difference between user-defined and automatic $k$, the automatic one finds the nearest-neighbors of a record more accurately.
\begin{figure}[ht!]
\centering
  \setlength{\belowcaptionskip}{0pt}
	\setlength{\abovecaptionskip}{0pt}	
	    \subfigure[User-defined $k$]
	    {
	        \includegraphics[width=0.40\linewidth]{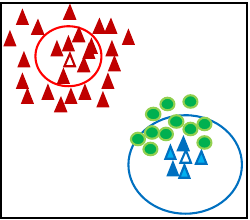}
	        \label{fig:userdefinedk}
	    }
	    \subfigure[Automatic $k$]
	    {
	        \includegraphics[width=0.40\linewidth]{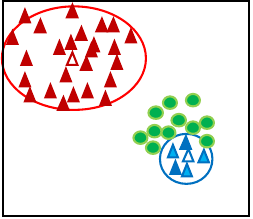}
	        \label{fig:automatick}
	    }
	    \caption{Justification of determining $k$ value automatically in manifold regularization.}
	    \label{fig:manifold_k}
\end{figure}

We now test the influences of automatic $k$ on the RT dataset. We first apply TLF on each pair for different user-defined $k$ values such as 4, 8, 16, 32 and 64, and then calculate classification accuracy for each $k$. After that we apply TLF on each pair where $k$ is determined automatically and then calculate classification accuracy. We present the results of the people$\mapsto$place pair in Fig.~\ref{fig:justifyautok}. The figure clearly shows the effectiveness of automatic $k$ since TLF achieves the best results when we determine $k$ automatically. We also obtain similar results for other two pairs.
\begin{figure}[ht!]
\centering
  \setlength{\belowcaptionskip}{0pt}
	\setlength{\abovecaptionskip}{0pt}	
    \includegraphics[width=0.70\linewidth]{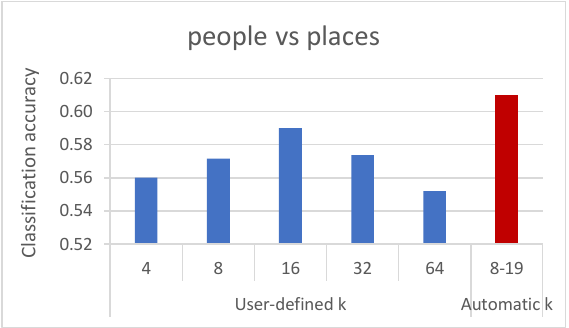}
   \caption{Justification of automatic determination of k value for the cross-domain pair ``people$\mapsto$place'' on RT dataset in terms of classification accuracy.}
	    \label{fig:justifyautok}
\end{figure}

The Laplacian matrix, $L$ of Eq.~\eqref{eq_manifold1} is calculated as
{\small
\begin{equation}
\begin{aligned}
L=I-D^{-\frac{1}{2}}BD^{-\frac{1}{2}} 
\end{aligned}
\label{eq_manifold_L}
\end{equation}
} 
where $D$ is a diagonal matrix with each element $D_{ii}=\sum^z_{j=1}{B_{ij}}$. Thus, by incorporating Eq.~\eqref{eq1_representertheorem} into Eq.~\eqref{eq_manifold1}, we get
{\small
\begin{equation}
\begin{aligned}
M_{f,K}(P_s,P_t)=tr(\alpha^T{K}{L}{K}\alpha)
\end{aligned}
\label{eq_manifold_all}
\end{equation}
} 
Using equations~\eqref{eq_srm_minimize}, \eqref{eq_joint_mmd2} and \eqref{eq_manifold_all}, we can reformulate $f$ in Eq.~\eqref{eq_generalframework} as follows.
{\small
\begin{equation}
\begin{aligned}
	f=\argmin_{f\in\mathcal{H}} \left\|R-\alpha^TK\right\|^{2}_{F}+\sigma(tr(\alpha^T{K}\alpha))\\+\lambda(tr(\alpha^T{K}{M}{K}\alpha))+\gamma(tr(\alpha^T{K}{L}{K}\alpha))
\end{aligned}
\label{eq_final_f}
\end{equation}
}
Setting derivative $\frac{\partial f}{\partial\alpha}=0$, we obtain the solution as
{\small
\begin{equation}
\begin{aligned}
	\alpha=\sigma{I}+(\lambda{M}+\gamma{L})K
\end{aligned}
\label{eq_final_alpha}
\end{equation}
}
We now transform the attribute contribution matrices $W^{[N_p\times d_s]}_s$ and $W^{[N_p\times d_t]}_t$ into $G^{[z\times d_s]}_s$ and $G^{[z\times d_t]}_t$, respectively as follows.
{\small
\begin{equation}
\begin{aligned}
  (G_s)_{ij} = \left\{
  \begin{array}{l l}
    e_{ij} & \text{if $e_{ij}=w_{ij} \wedge w_i\in{W_s}$}\\	
		0 & \text{otherwise}\\
  \end{array} \right.
	;\forall i,j
\end{aligned}
\label{eq_src_transform}
\end{equation}
} 
{\small
\begin{equation}
\begin{aligned}
  (G_t)_{ij} = \left\{
  \begin{array}{l l}
    e_{ij} & \text{if $e_{ij}=w_{ij} \wedge w_i\in{W_t}$}\\	
		0 & \text{otherwise}\\
  \end{array} \right.
	;\forall i,j
\end{aligned}
\label{eq_tgt_transform}
\end{equation}
} 
Using equations~\eqref{eq_final_alpha}, \eqref{eq_src_transform} and \eqref{eq_tgt_transform}, we obtain the projection matrix $P_s$ as follows.
{\small
\begin{equation}
\begin{aligned}
	P_s=G^T_s{\alpha}{G_t}
\end{aligned}
\label{eq_final_Ps}
\end{equation}
}
\textbf{Step-5: Find and transfer projected source data to the target domain and build the final classifier.}

In this step, we first identify transferable data $D^{\prime}_s$ from  $D_s$ by using the projection matrix $P_s$ (obtained in Eq.~\eqref{eq_final_Ps}) and source pivots $V_s$ (obtained in Step 3). We then project $D^{\prime}_s$ into $D^{\prime\prime}_s$ i.e. $D^{\prime{\prime}}_s=D^{\prime}_s\times{P_s}$. After that we merge $D^{\prime\prime}_s$ with $D_t$ and obtain $D^{\prime}_t$ (see Step 5 of Algorithm~\ref{algo_tlf} and Fig.~\ref{fig:blockdiagram}). We finally build the final forest $F$ by applying RF on $D^{\prime}_t$. The final classifier is then used to classify unlabeled target data.

%algorithm, added Gea
\IncMargin{0.5em}
\begin{algorithm}
\SetKwBlock{StepZero}{Initialize:}{end}
\SetKwBlock{StepOne}{Step 1:}{end}
\SetKwBlock{StepTwo}{Step 2:}{end}
\SetKwBlock{StepThree}{Step 3:}{end}
\SetKwBlock{StepFour}{Step 4:}{end}
\SetKwBlock{StepFive}{Step 5:}{end}

\SetKwInOut{Input}{Input}\SetKwInOut{Output}{Output}
{\scriptsize 
\Input{Source dataset $D_s$, target dataset $D_t$.
}
\Output{Classifier $F$.}
\BlankLine
\DontPrintSemicolon

\StepOne{
Build two decision forests $F_s$ and $F_t$ by applying an existing decision forest algorithm such as RF~\cite{breiman2001random} on $D_s$ and $D_t$, respectively, where $|F_s|$=$|F_t|$=$\tau$ trees;\;
Find source leaves, $L_s \leftarrow findLeaves(F_s)$;\;
Find source label distributions, $V_s \leftarrow findClassDistribution(D_s, L_s)$;\;
Find source centroids, $W_s \leftarrow findCentroids(D_s, L_s)$;\;
Find source centroid's labels, $R_s \leftarrow findCentroidLabel(D_s, L_s)$;\;
Find target leaves, $L_t \leftarrow findLeaves(F_t)$;\;
Find target label distributions, $V_t \leftarrow findClassDistribution(D_t, L_t)$;\;
Find target centroids, $W_t \leftarrow findCentroids(D_t, L_t)$;\;
Find target centroid's labels, $R_t \leftarrow findCentroidLabel(D_t, L_t)$;\;
}

\StepTwo{ 
Remove duplicates from $V_s$ and $V_t$. For every duplicate entry in $V_s$ and $V_t$, the corresponding centroids in $W_s$ and $W_t$, and centroid labels $R_s$ and $R_t$ are aggregated.
}

\StepThree{
Calculate similarity matrix, $S\leftarrow calculateSimilarityByJSD(V_s, V_t)$ between source and target label distributions;\; 
Find source pivots, $V_s\leftarrow findPivots(V_s, S)$ and associated centroids $W_s$ and labels $R_s$;\;  
Find target pivots, $V_t\leftarrow findPivots(V_t, S)$ and associated centroids $W_t$ and labels $R_t$;\; 
}
\StepFour{	
Find projection matrix, $P_s\leftarrow findProjectionMatrix(W_s, R_S, W_t, R_t)$ by considering ridge, maximum mean discrepancy (MMD) and manifold regularizations as shown in Eq.~\eqref{eq_final_Ps}\; 
}
\StepFive{
	Find transferrable source data, $D^{\prime}_s\leftarrow findProjectedSourceData(V_s, D_s, P_s)$;\;
	Projected source data, $D^{\prime\prime}_s\leftarrow D^{\prime}_s\times P_s$;\;
	New target dataset, $D^{\prime}_t\leftarrow merge(D^{\prime\prime}_s, D_t)$;\;
	Build the final decision forest, $F$ by applying the RF on $D^{\prime}_t$; \;
	Return $F$.
}
}
\caption{LearnTLF()}\label{algo_tlf}
\end{algorithm}\DecMargin{0.5em}

\subsection{Computational Complexity}
\label{computationalcomplexity}

We now calculate the complexity of TLF (as shown in Algorithm~\ref{algo_tlf}). As input, TLF takes two datasets: $D_s$ has $n_s$ records and $d_s$ attributes, and $D_t$ has $n_t$ records and $d_t$ attributes. Let $D_s$ and $D_t$ have $C$ common labels and $n(=n_s+n_t)$ records.

In Step 1, we build two forests $F_s$ and $F_t$ having $\tau$ trees each. Since $n_s\gg n_t$, the complexity of building forests is $O({\tau}{d_s}{ n^2_s}{log(n_s)})$~\cite{sukhija2019supervised}. We then identify shared label distributions that has a complexity $O({\tau}^2{n_s}{n_t}C)$. 

The complexity of Step 2 is $C(l^2_s+l^2_t)$ where $l_s$ and $l_t$ are the leaves of $F_s$ and $F_t$, respectively. In Step 3, $N_p$ common label distributions across the domains are identified. The complexity of Step 3 is $O(N_p(d_s+d_t))$. 

The complexity of Step 4 consists of three parts: (i) solving the linear system of Eq.~\eqref{eq_srm_minimize} using LU decomposition that has a complexity $O((z)^3)$~\cite{long2014adaptation} for TLF, where $z=2N_p$, (ii) for constructing the graph Laplacian matrix L, TLF needs $O((d_s+d_t){z^2})$, which is performed once, and (iii) for constructing the kernel matrix $K$ and aggregate
MMD matrix $M$, TLF requires $O(C{z^2})$. In Step 5, TLF needs $O({n_s}{d_s}{d_t})$ for transferring the source dataset to target domain. Finally, in Step 6, TLF builds the final decision forest which requires $O(\tau{d_t}{n^2}{log(n)})$. Typically, $n_t$, $N_p$, $C$, $z$, $l_s$ and $l_t$ are very small, especially compared to $n$. Therefore, the overall complexity of TLF is $O({n_s}{d_s}{d_t}+\tau{d_t}{n^2}log(n))$.

%Experiment result section
\section{Experiments}
\label{experiments}

\subsection{Data Preparation}
\label{datapreparation}

We apply TLF and fourteen existing methods on seven real datasets which are shown at a glance in Table~\ref{tab:Datasets}. 

\begin{table*}[ht!]
	\small
	\centering
	\setlength{\belowcaptionskip}{-10pt}
	\setlength{\abovecaptionskip}{-5pt}	
	\caption{datasets at a glance.}
	\renewcommand\tabcolsep{3pt} 
	\begin{tabular}{lrcccl}
	\toprule
		dataset [Ref.]&\textit{\#}Records& \textit{\#}Attributes&\textit{\#}Classes& Sub-domain & Area\\
		\midrule
			PIE~\cite{transferlearning.xyz}&11554&1024&68&PIE5, PIE7, PIE9, PIE27, PIE29&Face recognition\\
			Office+Caltech (OC)~\cite{transferlearning.xyz}&2533&800&10&amazon, Caltech10, dslr, webcam&Object recognition\\
			20-Newsgroups (NG)~\cite{transferlearningRsc}&40252&~10000&2&C, S, R, T&Text Classification\\
			VLSC~\cite{transferlearning.xyz}&18070&4096&5&C, I, L, S, V&Image classification\\
			Reuters-21578 (RT)~\cite{transferlearningRsc}&6570&~4000&3&Orgs,People,Places&Text Classification\\
			MNIST+USPS (MU)~\cite{transferlearning.xyz}&3800&256&10&USPS, MNIST&Object recognition\\
		  COIL20 (COIL)~\cite{transferlearning.xyz}&1440&1024&20&COIL1, COIL2&Object recognition\\
		\bottomrule
	\end{tabular}
	\label{tab:Datasets}
\end{table*}

PIE~\cite{transferlearning.xyz} has 11554 handwritten digits from 5 different subcategories: PIE5, PIE7, PIE9, PIE27 and PIE29. We generate 20 combinations of source-target pairs as follows: PIE5$\mapsto$PIE7, PIE5$\mapsto$PIE9, PIE5$\mapsto$PIE27, PIE5$\mapsto$PIE29, PIE7$\mapsto$PIE5, and so on. Office+Caltech (OC)~\cite{transferlearning.xyz} has 4 different subcategories: amazon, Caltech10, dslr and webcam. We generate 12 combinations of source-target pairs as follows: amazon$\mapsto$Caltech10, amazon$\mapsto$dslr, amazon$\mapsto$webcam, Caltech10$\mapsto$amazon, and so on.

20-Newsgroups (NG)~\cite{transferlearningRsc} has approximately 20,000 documents distributed evenly in 4 different subcategories: C, S, R, and T. Following the approach in ARTL~\cite{long2014adaptation}, we have 5 combinations of source-target pairs as follows: CS-src$\mapsto$CS-tgt, CT-src$\mapsto$CT-tgt, RS-src$\mapsto$RS-tgt, RT-src$\mapsto$RT-tgt and ST-src$\mapsto$ST-tgt. Reuters-215782 (RT)~\cite{transferlearningRsc} has three top categories orgs, people, and place. Following the approach in ARTL~\cite{long2014adaptation}, we can construct 3 cross-domain pairs as follows: orgs$\mapsto$people, orgs$\mapsto$place and people$\mapsto$place.

VLSC~\cite{transferlearning.xyz} consists of 18070 images from 5 different subcategories: C, I, L, S, and V. We generate 20 combinations of source-target pairs as follows: C$\mapsto$I, C$\mapsto$L, C$\mapsto$S, C$\mapsto$V, I$\mapsto$C, and so on. Similarly, MNIST+USPS (MU) and COIL~\cite{transferlearning.xyz} have 4 different subcategories from which we generate 12 combinations of source-target pairs, separately.

\subsection{Experimental Setup}
\label{experimentalsetup}

\subsubsection{Baseline Methods}
\label{baselinemethods}
We compare the performance of TLF with fourteen state-of-the-art machine learning and transfer learning techniques, comprising twelve transfer learning and two baseline (non-transfer learning) algorithms, namely SHOT++~\cite{liang2021source}, TNT~\cite{chen2016transfer}, TOHAN~\cite{chi2021tohan}, SHDA~\cite{sukhija2019supervised}, MEDA~\cite{wang2020transfer, wang2018visual}, TCA~\cite{pan2011domain}, BDA~\cite{wang2017balanced}, 	CORAL~\cite{sun2015return},	EasyTL~\cite{wang2019easytl}, JDA~\cite{long2013transfer}, ARTL~\cite{long2014adaptation}, GFK~\cite{gong2014learning}, RF~\cite{breiman2001random}, and SysFor~\cite{islam2011knowledge}.

We implement TLF in the Java programming language using the Weka APIs~\cite{witten2016data}. Upon acceptance, the source code of TLF will be made available at GitHub. We also implement an existing technique called SHDA~\cite{sukhija2019supervised}. For implementing the RF and SysFor, we use the Java code from the Weka platform~\cite{witten2016data}. For SHOT++\footnote{SHOT++ code: \url{https://github.com/tim-learn/SHOT-plus}}, TNT\footnote{\url{https://github.com/wyharveychen/TransferNeuralTrees}} and TOHAN\footnote{TOHAN code: \url{https://github.com/Haoang97/TOHAN}}, we use the code available at GitHub. All other methods are already available in the MEDA framework~\cite{wang2018visual}. We use the default settings of the methods while running the experiment. 

\subsubsection{Implementation Details}
\label{implementationdetails}

In our experiment, we use a number of parameters for TLF and existing techniques. Forest algorithms require two common parameters, namely number of trees ($\tau$) and minimum leaf size. $\tau$ is set to 10 and the minimum leaf size for large datasets (i.e. the size of the dataset is greater than 10000) is set to 50, otherwise it is set to 20. For TLF, we use $\lambda=0.01$, $\theta=5$ and $\gamma=0.001$ for manifold, MMD and ridge regularizations, respectively. In our experimentation we use the majority voting~\cite{jamali2011majority} method to calculate final output of a forest.

\subsection{Impact of Target Dataset Size}
\label{impact_targetdata}

In this section, we evaluate how the size of the target dataset affects the classification performance of a transfer learning method. We also explore the necessity of transfer learning across domains. From the original target dataset $D^{ori}_t$, of each source$\mapsto$target pair, we create a new target dataset $D^{new}_t$ and a test dataset $D^{test}_t$, where $D^{new}_t$ contains x\% ($i.e. x=1, 2, 3, 4, 5, 10, 15,...,95$) randomly selected records of $D^{ori}_t$ and $D^{test}_t$ contains the remaining records of $D^{ori}_t$. For each x value, we first build a classifier by applying a transfer learning method on $D_s$ and $D^{new}_t$, and then calculate the classification accuracy by applying the classifier on $D^{test}_t$.

To explore the affects of target dataset size, we carry out an experimentation on the VLSC and COIL datasets and present the average classification accuracies of 10 runs of RF-S, RF-T and TLF in Fig.~\ref{fig:justifytgtratio}. For RF-S, we build a forest by applying RF on the source data, $D_s$ and use the forest to classify the test data, $D^{test}_t$. Similarly, for RF-T, we build a forest by applying RF on the target data, $D^{new}_t$ and use the forest to classify the test data, $D^{test}_t$. The x-axis of the figure shows the size of $D^{new}_t$ and corresponding ratio (\% of records) of $D^{ori}_t$. 

\begin{figure}[ht!]
\centering
  \setlength{\belowcaptionskip}{0pt}
	\setlength{\abovecaptionskip}{0pt}		    
	    \subfigure[VLSC dataset]
	    {
	        \includegraphics[width=0.80\linewidth]{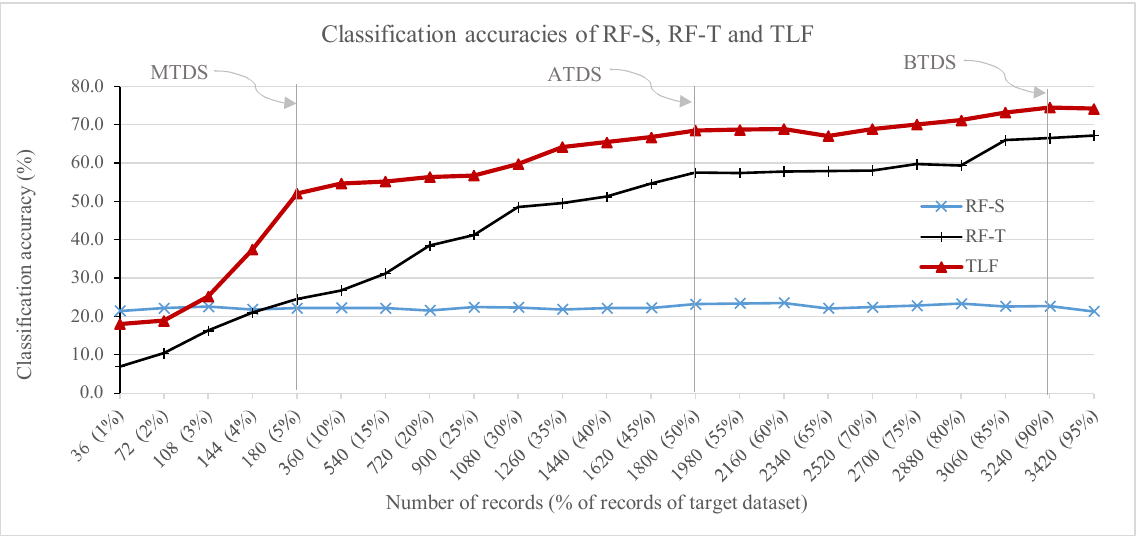}
	        \label{fig:justifytgtratio_vlsc}
	    }
			\subfigure[COIL dataset]
	    {
	        \includegraphics[width=0.80\linewidth]{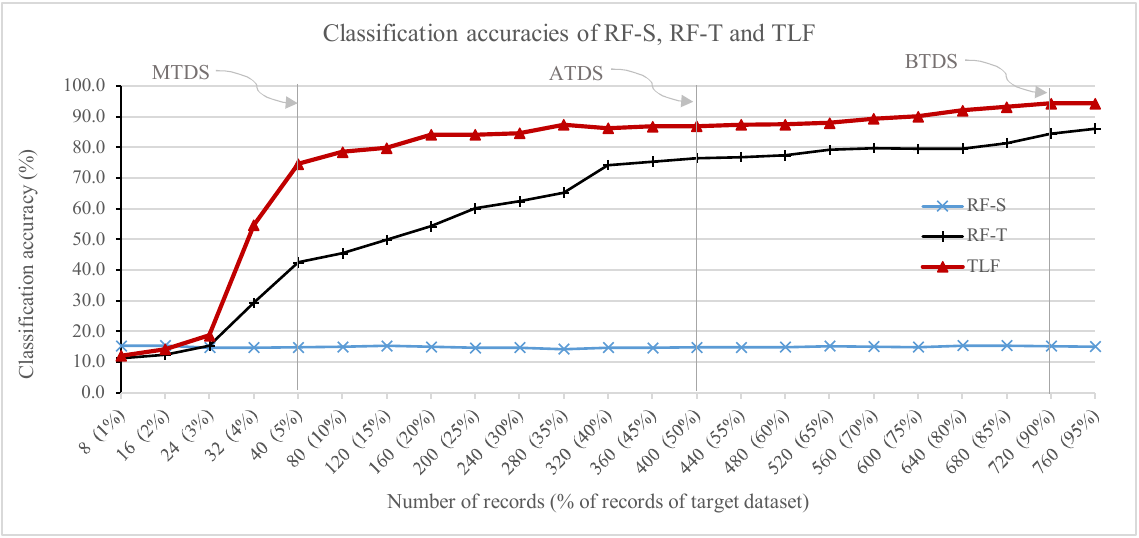}
	        \label{fig:justifytgtratio_coil}
	    }
	    \caption{Analysis of the effects of target dataset size on the performance of TLF, RF-S and RF-T.}
	    \label{fig:justifytgtratio}
\end{figure}

From the empirical evaluation on VLSC and COIL datasets (see Fig.~\ref{fig:justifytgtratio}a and Fig.~\ref{fig:justifytgtratio}b), we have several interesting observations. (1) The performances of both TLF and RF-T are improved with the increase of the size of $D^{new}_t$. This observation is sensible and natural. (2) The performance of RF-S is not improved even if the size of $D^{new}_t$ is increased. This observation is also reasonable as source and target data are drawn from different distributions, and the classifier is built only on $D_s$. (3) When the size of $D^{new}_t$ is less than 5\%, none of the methods perform well due to insufficient data to build a good classifier. (4) When the size of $D^{new}_t$ is 5\%, TLF performs better than RF-T indicating the minimum size of $D^{new}_t$ which is required to bridge the gap between the source and target domains. Moreover, the influence of source data is demonstrated in the performance of TLF. (5) When the size of $D^{new}_t$ is 50\%, both TLF and RF-T perform average indicating $D^{new}_t$ is still insufficient to build a good classifier. (6) When the size of $D^{new}_t$ is 90\% or above, both TLF and RF-T perform their best indicating that $D^{new}_t$ is sufficient to build a good forest. However, due to transferring knowledge from the source domain, TLF outperforms RF-S and RF-T.

All the observations verify the impact of the target dataset size on the performances of both transfer learning and non-transfer learning methods, and clearly demonstrate the need for transfer learning across domains. Although, TLF assumes a small labeled data in $D^{new}_t$, we evaluate and compare TLF and existing methods by creating $D^{new}_t$ and $D^{test}_t$ based on the following three scenarios:

\textbf{Minimun target dataset size (MTDS):} For each source$\mapsto$target pair, we create $D^{new}_t$ and $D^{test}_t$ having 5\% and 95\% of $D^{ori}_t$, respectively. 

\textbf{Average target dataset size (ATDS):} For each source$\mapsto$target pair, we create $D^{new}_t$ and $D^{test}_t$ having 50\% and 50\% of $D^{ori}_t$, respectively. 

\textbf{Best target dataset size (BTDS):} For each source$\mapsto$target pair, we create $D^{new}_t$ and $D^{test}_t$ having 90\% and 10\% of $D^{ori}_t$, respectively. 

In Fig.~\ref{fig:justifytgtratio}, the scenarios (i.e. MTDS, ATDS, and BTDS) are marked with vertical lines. We evaluate and compare TLF and existing methods for each scenario in the following section.

\subsection{Experimental Results}
\label{experimentalresults}

\subsubsection{Minimun target dataset size (MTDS) Results}
\label{experimental_mtds}
In MTDS scenario, we first create (new) target and test datasets having 5\% and 95\% records, respectively of the original target dataset (discussed in Section~\ref{impact_targetdata}), and then evaluate and compare the performances of TLF and existing methods. We present the average performances of 10 runs of the methods on 20 pairs of SDD and TDD that are created from PIE dataset in Table~\ref{tab:pie_detail_accuracy}. 

For each method with suffix '-S', we first build a classifier by applying the method on the SDD. We then calculate the classification accuracy by applying the classifier on the test data. For example, for combination 1 (``PIE05$\mapsto$PIE07'') the classification accuracy of RF-S is 5.2$\pm$1.8 which is reported in the first row (under the RF-S column) in Table~\ref{tab:pie_detail_accuracy}. Similarly, for each method with suffix '-T', we first build a classifier by applying the method on the TDD and then calculate the classification accuracy by applying the classifier on the test data.

For each transfer learning method, we take both SDD and TDD as input of the method and get a classifier as output. We then calculate the classification accuracy by applying the classifier on the test data. For example, for combination 1 (``PIE05$\mapsto$PIE07
'') the classification accuracy of TLF is 33.8$\pm$0.7 which is reported in the first row (under the TLF column) in Table~\ref{tab:pie_detail_accuracy}. The best results are highlighted in bold. TLF performs the best in all combinations.

For each method, we also calculate the average classification accuracy from the classification accuracies of 20 combinations as shown in the last row of the table. The average classification accuracy of TLF is 47.2$\pm$0.8. It is clear from the experimental results that TLF outperforms other techniques on PIE dataset.

\begin{table*}[!ht]
\tiny
	\centering
	\setlength{\belowcaptionskip}{-10pt}
	\setlength{\abovecaptionskip}{-5pt}	
	\caption{Average classification accuracy with standard deviation (\%) of TLF and existing methods on PIE dataset in MTDS scenario. The best results are highlighted in bold.}
	\renewcommand\arraystretch{1.1} 
	\renewcommand\tabcolsep{0pt} 
	\begin{tabular*}{\textwidth}{@{\extracolsep{\fill}}llccccccccccccccccc}
		\toprule
		\multicolumn{2}{c}{Domains} & \multicolumn{4}{c}{Baseline methods}& \multicolumn{12}{c}{Existing transfer learning methods}& \multirow{2}*{TLF}\\  
	\cline{1-2}\cline{3-6}\cline{7-18}
		Source&Target&RF-S&RF-T&SysFor-S&SysFor-T&EasyTL&ARTL&GFK&TCA&JDA&CORAL&BDA&MEDA&TOHAN&TNT&SHOT++&SHDA&\\
		\midrule
PIE05& PIE07&5.2$\pm$1.8&6.4$\pm$1.4&2.5$\pm$1.3&4.5$\pm$1.3&9.8$\pm$2.2&26.5$\pm$2.1&23.3$\pm$2.3&27.4$\pm$0.6&21.4$\pm$0.8&11.1$\pm$2.8&30.6$\pm$2.0&26.9$\pm$2.5&29.4$\pm$2.5&29.3$\pm$2.5&29.6$\pm$1.4&29.8$\pm$2.3&\textbf{33.8$\pm$0.7}\\
PIE05& PIE09&4.1$\pm$2.2&5.0$\pm$1.8&2.3$\pm$2.6&5.1$\pm$1.0&12.4$\pm$1.6&37.8$\pm$2.3&29.4$\pm$2.3&32.2$\pm$1.3&26.5$\pm$1.2&12.7$\pm$2.2&33.5$\pm$2.3&28.2$\pm$0.9&30.1$\pm$0.7&32.1$\pm$2.7&33.1$\pm$1.1&22.4$\pm$0.4&\textbf{38.1$\pm$1.5}\\
PIE05& PIE27&5.6$\pm$1.2&9.0$\pm$2.0&2.4$\pm$0.4&8.7$\pm$1.3&33.7$\pm$0.9&54.0$\pm$1.0&35.4$\pm$2.1&49.3$\pm$3.0&23.6$\pm$1.0&36.8$\pm$2.7&51.0$\pm$1.8&48.5$\pm$1.9&41.2$\pm$2.3&51.2$\pm$1.7&53.1$\pm$1.1&34.3$\pm$1.4&\textbf{58.4$\pm$0.5}\\
PIE05& PIE29&3.9$\pm$1.7&6.0$\pm$2.7&1.5$\pm$1.8&4.8$\pm$2.3&9.2$\pm$1.7&17.2$\pm$1.5&21.9$\pm$1.4&24.2$\pm$1.8&21.4$\pm$0.9&9.7$\pm$0.5&24.6$\pm$0.2&17.4$\pm$1.8&21.4$\pm$2.0&23.1$\pm$0.3&24.1$\pm$2.4&22.4$\pm$2.6&\textbf{24.9$\pm$1.2}\\
PIE07& PIE05&6.6$\pm$1.1&15.2$\pm$2.1&2.2$\pm$0.8&8.5$\pm$1.2&22.6$\pm$2.8&32.1$\pm$2.7&23.4$\pm$0.8&23.2$\pm$0.9&26.5$\pm$1.1&22.5$\pm$0.2&23.8$\pm$0.8&21.9$\pm$0.7&32.1$\pm$2.8&32.5$\pm$1.6&33.2$\pm$2.8&25.3$\pm$0.0&\textbf{38.6$\pm$0.5}\\
PIE07& PIE09&12.1$\pm$1.8&5.0$\pm$2.0&3.7$\pm$1.4&5.1$\pm$1.4&13.3$\pm$0.3&49.2$\pm$0.5&48.8$\pm$1.9&34.0$\pm$3.0&26.5$\pm$1.2&13.7$\pm$0.3&49.4$\pm$0.7&42.7$\pm$0.1&45.1$\pm$0.6&47.2$\pm$0.7&51.4$\pm$0.9&22.4$\pm$1.9&\textbf{52.6$\pm$0.7}\\
PIE07& PIE27&18.1$\pm$2.9&9.0$\pm$2.2&8.0$\pm$1.3&8.7$\pm$2.8&35.8$\pm$1.1&48.2$\pm$0.6&61.2$\pm$1.4&51.4$\pm$0.9&37.7$\pm$0.8&35.4$\pm$2.1&36.2$\pm$2.4&48.4$\pm$0.8&58.1$\pm$1.9&59.4$\pm$2.7&62.1$\pm$0.8&34.3$\pm$2.3&\textbf{66.2$\pm$0.1}\\
PIE07& PIE29&8.6$\pm$2.6&6.0$\pm$1.8&2.2$\pm$1.2&4.8$\pm$1.8&10.5$\pm$0.0&32.7$\pm$1.4&31.2$\pm$2.4&29.5$\pm$2.5&21.5$\pm$1.5&10.8$\pm$1.1&33.8$\pm$1.1&27.1$\pm$1.2&31.1$\pm$1.0&30.2$\pm$3.0&32.1$\pm$2.1&22.4$\pm$1.0&\textbf{36.4$\pm$0.2}\\
PIE09& PIE05&4.9$\pm$1.8&15.2$\pm$2.0&4.6$\pm$2.2&8.5$\pm$1.6&21.8$\pm$2.9&32.3$\pm$1.9&35.7$\pm$1.8&29.7$\pm$2.8&26.5$\pm$0.3&22.8$\pm$1.6&38.9$\pm$0.4&22.4$\pm$1.5&32.1$\pm$0.9&30.5$\pm$1.0&34.5$\pm$2.1&35.3$\pm$1.6&\textbf{39.2$\pm$0.1}\\
PIE09& PIE07&9.5$\pm$2.6&6.4$\pm$3.0&5.1$\pm$1.2&4.5$\pm$2.7&12.3$\pm$1.4&42.8$\pm$1.9&25.3$\pm$2.5&36.6$\pm$1.5&35.6$\pm$2.4&12.8$\pm$1.9&25.2$\pm$0.3&43.4$\pm$2.7&41.2$\pm$0.2&42.2$\pm$2.2&45.1$\pm$1.9&29.8$\pm$0.4&\textbf{52.9$\pm$0.1}\\
PIE09& PIE27&14.7$\pm$2.0&9.0$\pm$2.7&10.4$\pm$1.5&8.7$\pm$1.8&35.6$\pm$2.3&53.3$\pm$2.9&62.8$\pm$1.3&54.7$\pm$1.2&45.1$\pm$2.5&36.0$\pm$0.8&36.6$\pm$0.7&53.6$\pm$1.8&61.2$\pm$1.4&58.2$\pm$0.6&60.1$\pm$2.9&34.3$\pm$0.0&\textbf{68.2$\pm$1.9}\\
PIE09& PIE29&7.3$\pm$2.7&6.0$\pm$2.7&5.4$\pm$2.5&4.8$\pm$2.8&12.7$\pm$1.0&32.5$\pm$1.4&39.2$\pm$0.1&36.1$\pm$0.7&31.5$\pm$2.9&12.6$\pm$2.3&36.2$\pm$0.8&33.1$\pm$1.6&25.4$\pm$1.0&30.1$\pm$1.2&34.1$\pm$2.9&22.4$\pm$2.2&\textbf{41.1$\pm$1.2}\\
PIE27& PIE05&7.7$\pm$2.3&15.2$\pm$1.3&2.5$\pm$1.9&8.5$\pm$2.9&36.1$\pm$1.5&53.8$\pm$0.0&57.6$\pm$0.7&25.3$\pm$0.3&52.6$\pm$0.9&38.6$\pm$1.5&35.5$\pm$1.7&48.0$\pm$0.2&51.0$\pm$0.6&54.1$\pm$1.9&56.2$\pm$2.7&43.0$\pm$0.3&\textbf{60.5$\pm$0.1}\\
PIE27& PIE07&15.6$\pm$1.5&6.4$\pm$1.0&5.7$\pm$2.1&4.5$\pm$2.2&19.2$\pm$2.9&59.0$\pm$0.4&46.6$\pm$1.6&56.0$\pm$0.1&65.2$\pm$0.7&18.9$\pm$0.7&70.1$\pm$0.1&59.1$\pm$1.8&67.1$\pm$1.4&65.1$\pm$1.9&68.1$\pm$3.0&29.8$\pm$1.8&\textbf{74.0$\pm$0.2}\\
PIE27& PIE09&18.8$\pm$2.8&5.0$\pm$2.5&8.2$\pm$1.4&5.1$\pm$1.1&21.1$\pm$2.2&66.6$\pm$0.8&73.4$\pm$1.3&62.5$\pm$2.8&45.7$\pm$1.9&21.8$\pm$0.6&47.5$\pm$1.6&66.4$\pm$2.8&52.4$\pm$1.7&63.5$\pm$1.5&61.5$\pm$0.3&22.4$\pm$0.3&\textbf{76.9$\pm$1.8}\\
PIE27& PIE29&9.8$\pm$1.2&6.0$\pm$1.8&4.3$\pm$1.0&4.8$\pm$1.5&14.9$\pm$0.1&43.8$\pm$2.8&24.8$\pm$0.6&44.9$\pm$1.9&36.5$\pm$0.1&15.1$\pm$2.2&42.2$\pm$3.0&37.4$\pm$0.7&35.2$\pm$2.4&34.1$\pm$0.6&39.4$\pm$0.6&22.4$\pm$1.6&\textbf{48.1$\pm$1.4}\\
PIE29& PIE05&4.6$\pm$1.2&15.2$\pm$1.8&2.0$\pm$1.5&8.5$\pm$2.8&14.8$\pm$1.5&21.2$\pm$2.6&25.1$\pm$1.5&20.2$\pm$0.5&21.5$\pm$2.5&18.7$\pm$1.4&25.6$\pm$1.0&12.6$\pm$1.3&21.4$\pm$1.8&25.1$\pm$2.6&24.1$\pm$2.1&24.4$\pm$1.6&\textbf{29.2$\pm$1.5}\\
PIE29& PIE07&5.8$\pm$2.8&6.4$\pm$1.5&2.1$\pm$2.3&4.5$\pm$2.7&9.1$\pm$1.9&25.3$\pm$2.3&29.6$\pm$1.7&25.6$\pm$0.2&25.1$\pm$0.4&9.9$\pm$0.8&28.4$\pm$0.5&25.7$\pm$0.5&20.1$\pm$2.8&24.1$\pm$2.3&23.5$\pm$0.3&23.0$\pm$3.0&\textbf{30.2$\pm$0.3}\\
PIE29& PIE09&3.4$\pm$1.4&5.0$\pm$2.1&2.2$\pm$2.7&5.1$\pm$2.0&10.8$\pm$1.3&28.0$\pm$0.8&23.3$\pm$2.9&28.6$\pm$1.8&21.4$\pm$2.2&12.3$\pm$2.7&23.2$\pm$1.3&28.4$\pm$2.1&21.4$\pm$0.3&23.3$\pm$2.6&23.4$\pm$0.5&22.4$\pm$1.4&\textbf{31.2$\pm$1.1}\\
PIE29& PIE27&5.0$\pm$2.6&9.0$\pm$1.2&2.2$\pm$1.2&8.7$\pm$2.2&26.1$\pm$0.3&31.2$\pm$2.9&39.1$\pm$2.8&35.9$\pm$0.3&32.2$\pm$2.0&27.0$\pm$1.2&43.2$\pm$1.2&31.9$\pm$0.6&25.7$\pm$0.4&28.1$\pm$0.6&34.9$\pm$1.7&34.3$\pm$1.1&\textbf{43.7$\pm$0.5}\\
\midrule
\multicolumn{2}{c}{Average}&8.6$\pm$2.0&8.3$\pm$2.0&4.0$\pm$1.6&6.3$\pm$2.0&19.1$\pm$1.5&39.4$\pm$1.7&37.9$\pm$1.7&36.4$\pm$1.4&32.2$\pm$1.4&20.0$\pm$1.5&36.8$\pm$1.2&36.2$\pm$1.4&37.1$\pm$1.4&39.2$\pm$1.7&41.2$\pm$1.7&27.8$\pm$1.4&\textbf{47.2$\pm$0.8}\\
		\bottomrule
	\end{tabular*}
	\label{tab:pie_detail_accuracy}
\end{table*}

Table~\ref{tab:oc_detail_accuracy} presents the performances of TLF and existing methods on the OC dataset. The best results are highlighted in bold. TLF performs the best in all pairs. The overall average classification accuracies also clearly indicate that TLF outperforms other methods on OC dataset. 

\begin{table*}[!ht]
\tiny
	\centering
	\setlength{\belowcaptionskip}{-10pt}
	\setlength{\abovecaptionskip}{-5pt}	
	\caption{Average classification accuracy with standard deviation (\%) of TLF and existing methods on OC dataset in MTDS scenario. The best results are highlighted in bold.}
	\renewcommand\arraystretch{1.1} 
	\renewcommand\tabcolsep{0pt} 
	\begin{tabular*}{\textwidth}{@{\extracolsep{\fill}}llccccccccccccccccc}
		\toprule
		\multicolumn{2}{c}{Domains} & \multicolumn{4}{c}{Baseline methods}& \multicolumn{12}{c}{Existing transfer learning methods}& \multirow{2}*{TLF}\\  
	\cline{1-2}\cline{3-6}\cline{7-18}
		Source&Target&RF-S&RF-T&SysFor-S&SysFor-T&EasyTL&ARTL&GFK&TCA&JDA&CORAL&BDA&MEDA&TOHAN&TNT&SHOT++&SHDA&\\
		\midrule
amazon& Caltech10&15.3$\pm$1.4&20.7$\pm$1.7&13.4$\pm$2.4&22.9$\pm$1.6&33.1$\pm$1.9&26.8$\pm$2.9&23.3$\pm$2.6&26.4$\pm$1.7&21.4$\pm$1.4&24.9$\pm$0.1&26.3$\pm$2.1&21.9$\pm$0.2&23.4$\pm$0.5&26.4$\pm$0.0&27.5$\pm$2.3&27.7$\pm$2.1&\textbf{35.4$\pm$1.9}\\
amazon& dslr&10.7$\pm$2.1&8.7$\pm$1.6&6.0$\pm$2.9&6.7$\pm$1.5&4.8$\pm$0.9&29.5$\pm$1.3&28.0$\pm$2.6&30.9$\pm$2.3&24.2$\pm$2.8&10.5$\pm$1.3&24.2$\pm$0.8&22.8$\pm$0.3&20.1$\pm$2.1&24.1$\pm$1.3&26.1$\pm$2.5&28.2$\pm$2.1&\textbf{31.3$\pm$1.4}\\
amazon& webcam&13.2$\pm$3.0&23.2$\pm$1.1&10.4$\pm$2.9&18.6$\pm$2.3&9.3$\pm$2.0&28.6$\pm$1.7&29.4$\pm$0.3&30.7$\pm$1.3&29.6$\pm$1.7&8.5$\pm$1.2&32.1$\pm$2.8&27.9$\pm$1.2&21.5$\pm$0.1&25.4$\pm$1.4&25.8$\pm$2.5&26.8$\pm$0.5&\textbf{32.5$\pm$1.0}\\
Caltech10&amazon&20.7$\pm$1.2&25.4$\pm$1.7&9.2$\pm$2.4&27.3$\pm$1.0&24.8$\pm$0.1&11.7$\pm$1.2&23.7$\pm$2.3&16.5$\pm$2.3&17.0$\pm$1.2&18.0$\pm$2.3&17.5$\pm$0.8&31.7$\pm$1.7&28.3$\pm$1.8&30.2$\pm$1.8&30.1$\pm$1.1&36.0$\pm$2.8&\textbf{36.7$\pm$0.0}\\
Caltech10& dslr&10.7$\pm$1.0&8.7$\pm$2.3&6.0$\pm$2.3&6.7$\pm$2.5&20.4$\pm$2.8&24.8$\pm$1.1&26.9$\pm$0.9&28.9$\pm$0.9&21.5$\pm$0.7&20.4$\pm$2.0&28.2$\pm$2.9&28.8$\pm$2.8&21.4$\pm$1.7&28.4$\pm$2.6&30.3$\pm$0.2&29.5$\pm$1.1&\textbf{36.9$\pm$0.0}\\
Caltech10& webcam&12.9$\pm$1.4&23.2$\pm$2.2&7.1$\pm$2.2&18.6$\pm$2.9&9.0$\pm$2.1&24.6$\pm$1.9&22.9$\pm$2.8&27.5$\pm$1.3&22.1$\pm$0.3&20.5$\pm$2.0&25.0$\pm$2.2&22.9$\pm$2.7&24.7$\pm$2.4&25.4$\pm$2.3&27.4$\pm$1.1&29.3$\pm$2.9&\textbf{32.9$\pm$0.8}\\
dslr&amazon&19.7$\pm$1.7&25.4$\pm$2.5&9.2$\pm$2.8&27.3$\pm$1.6&31.3$\pm$2.4&27.1$\pm$0.3&22.3$\pm$1.5&28.2$\pm$2.5&24.7$\pm$0.6&32.3$\pm$0.2&13.5$\pm$2.9&23.7$\pm$0.3&27.4$\pm$0.0&29.7$\pm$2.1&30.5$\pm$1.6&19.3$\pm$2.4&\textbf{35.4$\pm$0.9}\\
dslr& Caltech10&16.9$\pm$2.3&20.7$\pm$2.4&10.2$\pm$1.2&22.9$\pm$1.3&36.8$\pm$2.4&28.8$\pm$2.8&22.9$\pm$1.3&28.4$\pm$2.4&24.0$\pm$2.1&40.3$\pm$1.2&26.7$\pm$0.0&23.8$\pm$1.2&31.4$\pm$1.3&34.1$\pm$1.2&38.7$\pm$2.1&21.7$\pm$1.0&\textbf{42.9$\pm$1.0}\\
dslr& webcam&19.6$\pm$1.1&23.2$\pm$1.2&13.6$\pm$1.4&18.6$\pm$1.7&16.0$\pm$1.1&64.6$\pm$1.5&46.6$\pm$0.0&46.6$\pm$1.4&47.9$\pm$1.8&18.0$\pm$2.1&68.2$\pm$1.1&64.3$\pm$2.5&54.7$\pm$0.8&55.1$\pm$1.5&60.1$\pm$1.4&33.6$\pm$1.6&\textbf{68.6$\pm$1.7}\\
webcam&amazon&14.9$\pm$1.3&25.4$\pm$2.2&9.2$\pm$1.3&27.3$\pm$1.4&32.8$\pm$0.9&17.5$\pm$2.3&23.6$\pm$0.4&11.2$\pm$1.5&19.5$\pm$2.0&32.9$\pm$2.3&23.2$\pm$2.6&20.9$\pm$0.1&24.1$\pm$1.7&25.4$\pm$0.3&27.8$\pm$1.2&21.9$\pm$1.0&\textbf{35.8$\pm$1.2}\\
webcam& Caltech10&15.3$\pm$2.6&20.7$\pm$1.4&11.6$\pm$2.6&22.9$\pm$1.2&23.0$\pm$0.3&24.0$\pm$0.8&21.9$\pm$1.9&21.3$\pm$1.0&18.8$\pm$1.0&23.4$\pm$0.1&24.3$\pm$1.1&18.9$\pm$2.2&20.1$\pm$2.8&24.1$\pm$0.7&26.1$\pm$2.7&21.8$\pm$0.2&\textbf{29.4$\pm$1.5}\\
webcam& dslr&9.4$\pm$2.8&8.7$\pm$1.8&9.4$\pm$2.0&6.7$\pm$1.2&18.6$\pm$0.3&59.1$\pm$2.9&47.5$\pm$0.9&49.0$\pm$2.3&54.4$\pm$0.9&10.8$\pm$1.1&55.7$\pm$1.8&55.0$\pm$2.1&54.3$\pm$1.8&55.6$\pm$1.5&61.0$\pm$1.6&50.3$\pm$1.2&\textbf{62.7$\pm$1.5}\\
\midrule
\multicolumn{2}{c}{Average}&14.9$\pm$1.8&19.5$\pm$1.8&9.6$\pm$2.2&18.9$\pm$1.7&21.6$\pm$1.4&30.6$\pm$1.7&28.3$\pm$1.4&28.8$\pm$1.7&27.1$\pm$1.4&21.7$\pm$1.3&30.4$\pm$1.8&30.2$\pm$1.4&29.3$\pm$1.4&32.0$\pm$1.4&34.3$\pm$1.7&28.9$\pm$1.6&\textbf{40.0$\pm$1.1}\\
		\bottomrule
	\end{tabular*}
	\label{tab:oc_detail_accuracy}
\end{table*}

Similar to PIE and OC datasets, for the remaining five datasets TLF achieves higher classification accuracies than existing techniques. Due to space limitation, we present the average classification accuracies (see Table~\ref{tab:overall_accuracy}) of the TLF and existing techniques on all datasets. Bold values in the tables indicate the best results. From Table~\ref{tab:overall_accuracy}, we can see that TLF outperforms the other techniques in terms of overall average classification accuracy for all datasets in MTDS scenario.

%Overall classification accuracy of TLF framework 
\begin{table*}[!ht]
\tiny
	\centering
	\setlength{\belowcaptionskip}{-10pt}
	\setlength{\abovecaptionskip}{-5pt}	
	\caption{Overall average classification accuracy with standard deviation (\%) of TLF and existing methods on all datasets in MTDS scenario. The best results are highlighted in bold.}
	\renewcommand\arraystretch{1.1} 
	\renewcommand\tabcolsep{0pt} 
	\begin{tabular*}{\textwidth}{@{\extracolsep{\fill}}lccccccccccccccccc}
		\toprule
		\multirow{2}*{Dataset}& \multicolumn{4}{c}{Baseline methods}& \multicolumn{12}{c}{Existing transfer learning methods}& \multirow{2}*{TLF}\\  
	\cline{2-5}\cline{6-17}
		&RF-S&RF-T&SysFor-S&SysFor-T&EasyTL&ARTL&GFK&TCA&JDA&CORAL&BDA&MEDA&TOHAN&TNT&SHOT++&SHDA&\\
		\midrule
PIE&8.6$\pm$2.0&8.3$\pm$2.0&4.0$\pm$1.6&6.3$\pm$2.0&19.1$\pm$1.5&39.4$\pm$1.7&37.9$\pm$1.7&36.4$\pm$1.4&32.2$\pm$1.4&20.0$\pm$1.5&36.8$\pm$1.2&36.2$\pm$1.4&37.1$\pm$1.4&39.2$\pm$1.7&41.2$\pm$1.7&27.8$\pm$1.4&\textbf{47.2$\pm$0.8}\\
OC&14.9$\pm$1.8&19.5$\pm$1.8&9.6$\pm$2.2&18.9$\pm$1.7&21.6$\pm$1.4&30.6$\pm$1.7&28.3$\pm$1.4&28.8$\pm$1.7&27.1$\pm$1.4&21.7$\pm$1.3&30.4$\pm$1.8&30.2$\pm$1.4&29.3$\pm$1.4&32.0$\pm$1.4&34.3$\pm$1.7&28.9$\pm$1.6&\textbf{40.0$\pm$1.1}\\
NG&37.6$\pm$2.5&45.8$\pm$2.2&35.0$\pm$1.7&43.4$\pm$2.3&43.8$\pm$1.1&58.9$\pm$0.9&56.0$\pm$1.3&54.7$\pm$1.7&55.4$\pm$1.3&43.8$\pm$1.9&56.5$\pm$1.1&55.7$\pm$0.6&55.4$\pm$2.3&58.6$\pm$0.9&59.5$\pm$1.6&56.0$\pm$1.9&\textbf{64.0$\pm$0.9}\\
VLSC&22.2$\pm$2.0&23.5$\pm$1.9&20.6$\pm$2.1&22.1$\pm$1.8&31.3$\pm$1.1&62.9$\pm$1.5&32.1$\pm$1.4&28.0$\pm$1.5&29.3$\pm$1.9&27.4$\pm$1.4&34.8$\pm$1.1&55.6$\pm$1.4&48.0$\pm$1.6&53.7$\pm$1.7&58.0$\pm$1.5&47.7$\pm$1.1&\textbf{65.1$\pm$1.0}\\
RT&33.2$\pm$2.0&38.5$\pm$2.3&32.4$\pm$1.7&37.5$\pm$2.3&45.0$\pm$0.9&56.3$\pm$0.8&46.3$\pm$1.2&50.1$\pm$1.5&50.0$\pm$1.9&48.5$\pm$1.9&47.7$\pm$1.3&53.8$\pm$0.9&51.2$\pm$1.3&53.5$\pm$1.4&57.2$\pm$1.9&51.7$\pm$2.1&\textbf{61.5$\pm$1.1}\\
MU&13.8$\pm$2.0&21.2$\pm$2.1&11.1$\pm$2.0&18.6$\pm$2.1&37.4$\pm$1.7&49.1$\pm$1.8&33.8$\pm$1.7&36.8$\pm$1.7&36.3$\pm$1.3&36.9$\pm$1.8&35.9$\pm$1.7&47.0$\pm$1.4&42.0$\pm$1.3&47.4$\pm$1.4&48.9$\pm$1.5&40.6$\pm$1.4&\textbf{56.8$\pm$1.1}\\
COIL&14.8$\pm$1.7&41.4$\pm$1.9&13.9$\pm$2.0&33.4$\pm$2.1&54.5$\pm$1.7&68.3$\pm$1.6&62.1$\pm$1.9&62.8$\pm$1.9&60.9$\pm$1.3&54.1$\pm$1.4&62.4$\pm$1.3&64.5$\pm$1.3&57.0$\pm$1.5&60.9$\pm$1.5&66.8$\pm$1.6&63.1$\pm$1.6&\textbf{74.6$\pm$1.2}\\
\midrule
Average&20.7$\pm$2.0&28.3$\pm$2.0&18.1$\pm$1.9&25.8$\pm$2.0&36.1$\pm$1.3&52.2$\pm$1.4&42.3$\pm$1.5&42.5$\pm$1.6&41.6$\pm$1.5&36.0$\pm$1.6&43.5$\pm$1.3&49.0$\pm$1.2&45.7$\pm$1.5&49.3$\pm$1.4&52.3$\pm$1.6&45.1$\pm$1.6&\textbf{58.5$\pm$1.0}\\
		\bottomrule
	\end{tabular*}
	\label{tab:overall_accuracy}
\end{table*}

\subsubsection{Average target dataset size (ATDS) Results}
\label{experimental_atds}
In ATDS scenario, we evaluate and compare the performances of TLF and existing methods by creating (new) target and test datasets having 50\% and 50\% records, respectively of the original target dataset(discussed in Section~\ref{impact_targetdata}). For all source and target pairs, TLF achieves higher classification accuracies than the existing techniques on all datasets. Due to space limitation, in Table~\ref{tab:overall_accuracy_atds} we only present the overall average classification accuracies of the TLF and existing techniques on all datasets. The best results are highlighted in bold. From Table~\ref{tab:overall_accuracy_atds}, we can see that all methods perform better in ATDS scenario comparing with the in MTDS, and TLF achieves the highest overall average classification accuracies for all datasets. 

%Overall classification accuracy of TLF framework 
\begin{table*}[!ht]
\tiny
	\centering
	\setlength{\belowcaptionskip}{-10pt}
	\setlength{\abovecaptionskip}{-5pt}	
	\caption{Overall average classification accuracy with standard deviation (\%) of TLF and existing methods on all datasets for ATDS. The best results are highlighted in bold.}
	\renewcommand\arraystretch{1.1} 
	\renewcommand\tabcolsep{0pt} 
	\begin{tabular*}{\textwidth}{@{\extracolsep{\fill}}lccccccccccccccccc}
		\toprule
		\multirow{2}*{Dataset}& \multicolumn{4}{c}{Baseline methods}& \multicolumn{12}{c}{Existing transfer learning methods}& \multirow{2}*{TLF}\\  
	\cline{2-5}\cline{6-17}
		&RF-S&RF-T&SysFor-S&SysFor-T&EasyTL&ARTL&GFK&TCA&JDA&CORAL&BDA&MEDA&TOHAN&TNT&SHOT++&SHDA&\\
		\midrule
PIE&8.6$\pm$2.0&40.2$\pm$1.9&4.1$\pm$1.4&41.0$\pm$1.7&42.6$\pm$0.9&59.4$\pm$0.1&53.6$\pm$1.8&58.4$\pm$1.2&56.4$\pm$0.4&41.3$\pm$1.9&53.2$\pm$0.4&51.5$\pm$1.5&53.6$\pm$2.4&59.4$\pm$0.3&62.5$\pm$2.4&58.7$\pm$3.0&\textbf{64.4$\pm$0.5}\\
OC&15.4$\pm$2.9&39.2$\pm$3.0&10.6$\pm$2.7&40.8$\pm$2.3&41.2$\pm$1.8&61.5$\pm$0.8&57.2$\pm$0.4&48.5$\pm$0.0&50.1$\pm$1.7&48.3$\pm$0.4&52.9$\pm$2.0&50.4$\pm$1.5&49.6$\pm$0.2&57.0$\pm$2.6&59.5$\pm$0.8&57.7$\pm$0.9&\textbf{62.5$\pm$1.4}\\
NG&38.6$\pm$2.0&58.3$\pm$3.0&35.0$\pm$2.7&58.0$\pm$2.6&65.9$\pm$1.9&69.6$\pm$2.6&75.4$\pm$1.3&74.5$\pm$0.3&79.6$\pm$0.0&64.5$\pm$2.1&71.3$\pm$1.6&69.4$\pm$2.0&75.6$\pm$0.6&75.5$\pm$0.2&78.6$\pm$1.8&71.5$\pm$1.5&\textbf{81.7$\pm$0.6}\\
VLSC&23.2$\pm$2.9&57.5$\pm$2.8&21.3$\pm$1.2&57.1$\pm$2.0&51.3$\pm$1.6&61.2$\pm$2.5&62.7$\pm$1.1&53.6$\pm$2.4&57.6$\pm$2.3&51.3$\pm$0.9&57.2$\pm$1.7&64.3$\pm$3.0&62.1$\pm$1.2&62.6$\pm$2.8&67.3$\pm$0.7&53.6$\pm$2.5&\textbf{68.5$\pm$1.4}\\
RT&33.1$\pm$1.8&59.6$\pm$1.6&33.5$\pm$1.9&54.4$\pm$1.8&62.5$\pm$1.5&68.5$\pm$2.0&69.5$\pm$2.3&71.2$\pm$2.4&68.2$\pm$1.9&65.6$\pm$0.4&57.2$\pm$1.7&68.7$\pm$0.8&68.7$\pm$0.8&71.2$\pm$2.7&74.4$\pm$0.8&69.5$\pm$1.8&\textbf{79.5$\pm$0.8}\\
MU&14.5$\pm$2.9&57.3$\pm$2.8&12.3$\pm$2.7&57.7$\pm$1.1&61.9$\pm$0.2&61.3$\pm$1.0&54.9$\pm$2.3&54.0$\pm$1.8&56.7$\pm$0.7&60.5$\pm$0.6&52.7$\pm$0.7&61.4$\pm$0.1&62.6$\pm$0.6&68.2$\pm$1.1&72.0$\pm$0.8&67.5$\pm$2.0&\textbf{78.7$\pm$1.6}\\
COIL&14.9$\pm$1.5&76.5$\pm$2.7&14.1$\pm$2.3&72.3$\pm$2.7&71.6$\pm$2.6&74.2$\pm$2.9&81.3$\pm$1.4&72.8$\pm$1.2&70.2$\pm$2.6&68.3$\pm$1.5&72.3$\pm$1.5&75.5$\pm$0.5&69.3$\pm$0.5&74.6$\pm$0.6&81.3$\pm$1.3&79.7$\pm$0.5&\textbf{87.0$\pm$0.3}\\
\midrule
Average&21.2$\pm$2.3&55.5$\pm$2.5&18.7$\pm$2.1&54.5$\pm$2.0&56.7$\pm$1.5&65.1$\pm$1.7&64.9$\pm$1.5&61.9$\pm$1.3&62.7$\pm$1.4&57.1$\pm$1.1&59.5$\pm$1.4&63.0$\pm$1.3&63.1$\pm$0.9&66.9$\pm$1.5&70.8$\pm$1.2&65.5$\pm$1.7&\textbf{74.6$\pm$1.0}\\
		\bottomrule
	\end{tabular*}
	\label{tab:overall_accuracy_atds}
\end{table*}

\subsubsection{Best target dataset size (BTDS) Results}
\label{experimental_btds}
In BTDS scenario, we first create a (new) target dataset by randomly choosing 90\% records of the original target dataset and consider the remaining 10\% records for the test dataset. We then evaluate and compare the performances of TLF and existing methods. Similar to MTDS and ADTS scenarios, for all source and target pairs TLF achieves higher classification accuracies than the existing techniques on all datasets. Again, due to space limitation, we only present the overall average classification accuracies of the TLF and existing techniques on all datasets in Table~\ref{tab:overall_accuracy_btds}. The best results are highlighted in bold. From Table~\ref{tab:overall_accuracy_btds}, we can see that while all existing methods perform well due to sufficient data in TDD in BTDS scenario, TLF outperforms the existing methods in terms of overall average classification accuracy for all datasets. 
%Overall classification accuracy of TLF framework 
\begin{table*}[!ht]
\tiny
	\centering
	\setlength{\belowcaptionskip}{-10pt}
	\setlength{\abovecaptionskip}{-5pt}	
	\caption{Overall average classification accuracy with standard deviation (\%) of TLF and existing methods on all datasets for BTDS. The best results are highlighted in bold.}
	\renewcommand\arraystretch{1.1} 
	\renewcommand\tabcolsep{0pt} 
	\begin{tabular*}{\textwidth}{@{\extracolsep{\fill}}lccccccccccccccccc}
		\toprule
		\multirow{2}*{Dataset}& \multicolumn{4}{c}{Baseline methods}& \multicolumn{12}{c}{Existing transfer learning methods}& \multirow{2}*{TLF}\\  
	\cline{2-5}\cline{6-17}
		&RF-S&RF-T&SysFor-S&SysFor-T&EasyTL&ARTL&GFK&TCA&JDA&CORAL&BDA&MEDA&TOHAN&TNT&SHOT++&SHDA&\\
		\midrule
PIE&8.5$\pm$1.9&61.5$\pm$2.5&3.9$\pm$1.7&62.2$\pm$1.9&63.5$\pm$2.6&68.5$\pm$2.0&61.9$\pm$2.7&68.8$\pm$1.7&67.5$\pm$1.3&62.1$\pm$1.9&63.5$\pm$2.6&65.4$\pm$0.5&64.5$\pm$1.6&68.2$\pm$0.5&69.9$\pm$1.3&68.7$\pm$2.5&\textbf{71.6$\pm$0.4}\\
OC&16.2$\pm$2.6&60.2$\pm$1.2&11.2$\pm$2.9&59.3$\pm$2.5&60.2$\pm$1.7&67.5$\pm$2.0&60.5$\pm$2.1&64.5$\pm$0.8&64.8$\pm$0.8&61.2$\pm$2.6&60.6$\pm$0.0&67.5$\pm$0.0&63.6$\pm$1.5&69.8$\pm$0.8&69.8$\pm$0.8&67.5$\pm$1.5&\textbf{72.5$\pm$0.4}\\
NG&39.4$\pm$2.9&65.1$\pm$1.5&36.2$\pm$2.0&67.3$\pm$1.1&75.5$\pm$1.7&78.6$\pm$1.4&86.5$\pm$2.3&84.6$\pm$1.2&87.9$\pm$2.2&86.1$\pm$1.6&87.8$\pm$0.3&76.7$\pm$0.4&86.3$\pm$0.8&86.5$\pm$0.9&89.6$\pm$2.8&81.6$\pm$2.1&\textbf{92.3$\pm$1.1}\\
VLSC&22.7$\pm$2.5&66.5$\pm$1.8&22.3$\pm$1.0&66.7$\pm$2.2&67.3$\pm$0.4&71.1$\pm$1.0&68.3$\pm$2.6&68.2$\pm$2.3&68.5$\pm$0.9&65.6$\pm$1.6&68.3$\pm$0.5&70.2$\pm$2.3&68.6$\pm$0.1&72.3$\pm$1.8&73.2$\pm$1.0&71.7$\pm$2.0&\textbf{74.5$\pm$1.8}\\
RT&32.5$\pm$2.9&67.5$\pm$1.2&32.9$\pm$2.4&68.0$\pm$2.8&68.5$\pm$1.1&81.5$\pm$2.7&71.6$\pm$3.0&76.5$\pm$1.7&76.6$\pm$1.1&78.6$\pm$1.1&75.8$\pm$0.7&79.6$\pm$0.6&76.5$\pm$2.8&84.5$\pm$2.0&85.2$\pm$2.0&83.2$\pm$2.9&\textbf{87.3$\pm$1.7}\\
MU&15.3$\pm$2.1&74.2$\pm$1.9&12.0$\pm$2.7&74.6$\pm$2.1&75.3$\pm$2.3&84.5$\pm$2.9&74.6$\pm$2.2&81.2$\pm$2.0&76.4$\pm$0.5&80.7$\pm$1.4&78.6$\pm$2.2&81.7$\pm$2.0&81.0$\pm$0.9&85.6$\pm$2.9&86.6$\pm$1.1&81.2$\pm$0.4&\textbf{92.3$\pm$1.1}\\
COIL&15.2$\pm$2.8&84.5$\pm$2.3&15.2$\pm$1.2&85.2$\pm$2.0&86.5$\pm$0.5&90.0$\pm$1.2&86.6$\pm$1.1&86.9$\pm$1.8&87.5$\pm$1.9&87.9$\pm$1.1&89.7$\pm$2.1&87.7$\pm$0.4&89.7$\pm$3.0&90.0$\pm$3.0&91.5$\pm$2.7&90.1$\pm$1.2&\textbf{94.4$\pm$0.4}\\
\midrule
Average&21.4$\pm$2.5&68.5$\pm$1.8&19.1$\pm$2.0&69.0$\pm$2.1&71.0$\pm$1.5&77.4$\pm$1.9&72.8$\pm$2.3&75.8$\pm$1.6&75.6$\pm$1.2&74.6$\pm$1.6&74.9$\pm$1.2&75.5$\pm$0.9&75.7$\pm$1.5&79.6$\pm$1.7&80.8$\pm$1.7&77.7$\pm$1.8&\textbf{83.6$\pm$1.0}\\
		\bottomrule
	\end{tabular*}
	\label{tab:overall_accuracy_btds}
\end{table*}

From Table~\ref{tab:overall_accuracy}, Table~\ref{tab:overall_accuracy_atds} and Table~\ref{tab:overall_accuracy_btds}, we have the following observations. (1) The effectiveness of TLF is demonstrated in all scenarios since it consistently outperforms existing methods. (2) Only source domain data is not sufficient to build a good model for target domain which is different in terms of distributions, and features numbers and types. (3) For BTDS scenario, we see Table~\ref{tab:overall_accuracy_btds} from that the overall average accuracies of RF-T, SysFor-T and TLF are 68.5$\pm$1.8, 69.0$\pm$2.1, and 83.6$\pm$1.0, respectively. These results show the advantage of transferring knowledge from SDD and TDD over traditional ML methods even if the ML methods are learned with almost full data. (4) For all scenarios, the performance of SHOT++ is the second best, which demonstrates the effectiveness of deep neural network models in transfer learning.

\subsection{Statistical Analysis of the Results}
\label{Experimental_stat_analysis}

We now analyze the results by using a statistical non-parametric sign test~\cite{mason1994statistics} and Nemenyi~\cite{demvsar2006statistical} test for all datasets in all scenarios (i.e. MTDS, ATDS, and BTDS) to evaluate the statistical significance of the superiority of TLF over the existing methods. 

In MTDS scenario, we carry out sign test on the results of TLF with other methods (one by one) at the right-tailed by considering significance level $\alpha=0.025$ (i.e. 97.5\% significance level). While we compare TLF with an existing method (say Q) we obtain a z-value (test static value) which is used to determine the significant improvement of TLF over Q. The z-value for a comparison (i.e. TLF vs Q) on a dataset is calculated as follows~\cite{mason1994statistics}. 
{\small
\begin{equation}
\begin{aligned}
  z = \left\{
  \begin{array}{l l}
    \frac{x+\frac{1}{2} - \frac{n}{2}}{\frac{\sqrt{n}}{2}} & \text{ if $x > \frac{n}{2}$}\\		
		\frac{x-\frac{1}{2} - \frac{n}{2}}{\frac{\sqrt{n}}{2}}  & \text{otherwise}\\
  \end{array} \right.
\end{aligned}
\label{eq_z_signtest}
\end{equation}
} 
where $n$ is the total number of source and target pairs of a dataset, $x$ is the number of runs TLF achieves higher accuracy than Q. Besides, the z(ref) is the critical value which is 1.96 at $p<0.025$ and right-tailed~\cite{mason1994statistics}. The performance of TLF is significantly better than Q if z (test static) is greater than z(ref). 
We now see the z (test static) value for the TLF vs EasyTL comparison on NG dataset. The dataset has five pairs (discussed in Section~\ref{datapreparation}) that is $n=5$. TLF achieves higher accuracy than EasyTL in all cases resulting in $x=5$. According to Eq.~\eqref{eq_z_signtest}, the z value is 2.68 which is higher than the z(ref), 1.96. Therefore, TLF performs significantly better than EasyTL at $p<0.025$, and right-tailed. Similarly, we obtain z-values for other comparisons based on all datasets as shown in Fig.~\ref{fig:signtest}. The sign test results (see Fig.~\ref{fig:signtest}) indicate that both TLF performs significantly better than the other methods (at $z>1.96$, $p<0.025$, and right-tailed) on all datasets in MTDS scenario. It is worth mentioning that for ATDS and BTDS scenarios, we obtain similar results which are not presented here due to space limitation. 

\begin{figure*}[ht!]
\centering
  \setlength{\belowcaptionskip}{0pt}
	\setlength{\abovecaptionskip}{0pt}	
      \includegraphics[width=0.90\linewidth]{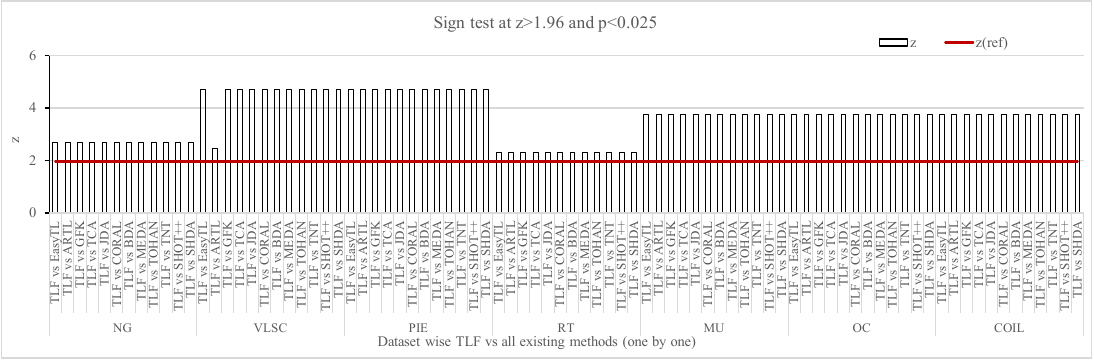}
	    \caption{Statistical significance analysis based on Sign test on all datasets in MTDS scenario.}
	    \label{fig:signtest}
\end{figure*}

Similarly, for all scenarios (i.e. MTDS, ATDS, and BTDS), we carry out the Nemenyi test on the results of TLF with other methods (one by one) at the right-tailed by considering significance level $\alpha=0.025$ (i.e. 97.5\% significance level). While we compare TLF with an existing method (say Q) we obtain a p-value which is used to determine the significant improvement of TLF over Q. The p-value for a comparison (i.e. TLF vs Q) on a dataset is calculated as follows~\cite{demvsar2006statistical}. Let, $n$ and $t$ be the number of source and target pairs of a dataset, and methods, respectively. For a dataset, we have $nt$ accuracies. For each accuracy of a method, we identify the rank among all $nt$ accuracies. So we have $n$ number of ranks for the method. After that we calculate the mean rank of the method. We then use the mean ranks TLF and Q to identify the difference, $d$ and standard error, $s$. We finally calculate the p-value as follows~\cite{demvsar2006statistical}.
{\scriptsize
\begin{equation}
\begin{aligned}
	p=1-f(x)=1- \frac{1}{\sigma\sqrt{2\pi}}e^{-\frac{1}{2}{(\frac{x-\mu}{\sigma})}^2}
\end{aligned}
\label{eq_p_nemenyitest}
\end{equation}
}
where $x=\frac{d}{s}$, mean $\mu=0$ and standard deviation $\sigma=1$. The performance of TLF is significantly better than Q if p is lower than p(ref) which is 0.025 in our case. For MTDS, ATDS, and BTDS scenarios, the Nemenyi test results indicate that TLF performs significantly better than the other techniques (at $p<0.025$) on all datasets. 

\subsection{Effectiveness Verification of TLF Components}
\label{effectivenessverification}
To inspect the effectiveness of each component, in MTDS scenario, we carry out an experimentation on the PIE data as shown in Fig.~\ref{fig:componentanalysis} where the first setting is the TLF with all components, the second setting is the TLF without ridge regularization, the third setting is the TLF without MMD regularization and the fourth setting is the TLF without manifold regularization. From the figure, we can see the importance of all components as the classification accuracy of the first setting (first bar of the figure) is higher than other settings.
\begin{figure}[ht!]
\centering
  \setlength{\belowcaptionskip}{0pt}
	\setlength{\abovecaptionskip}{0pt}	
	\includegraphics[width=0.90\linewidth]{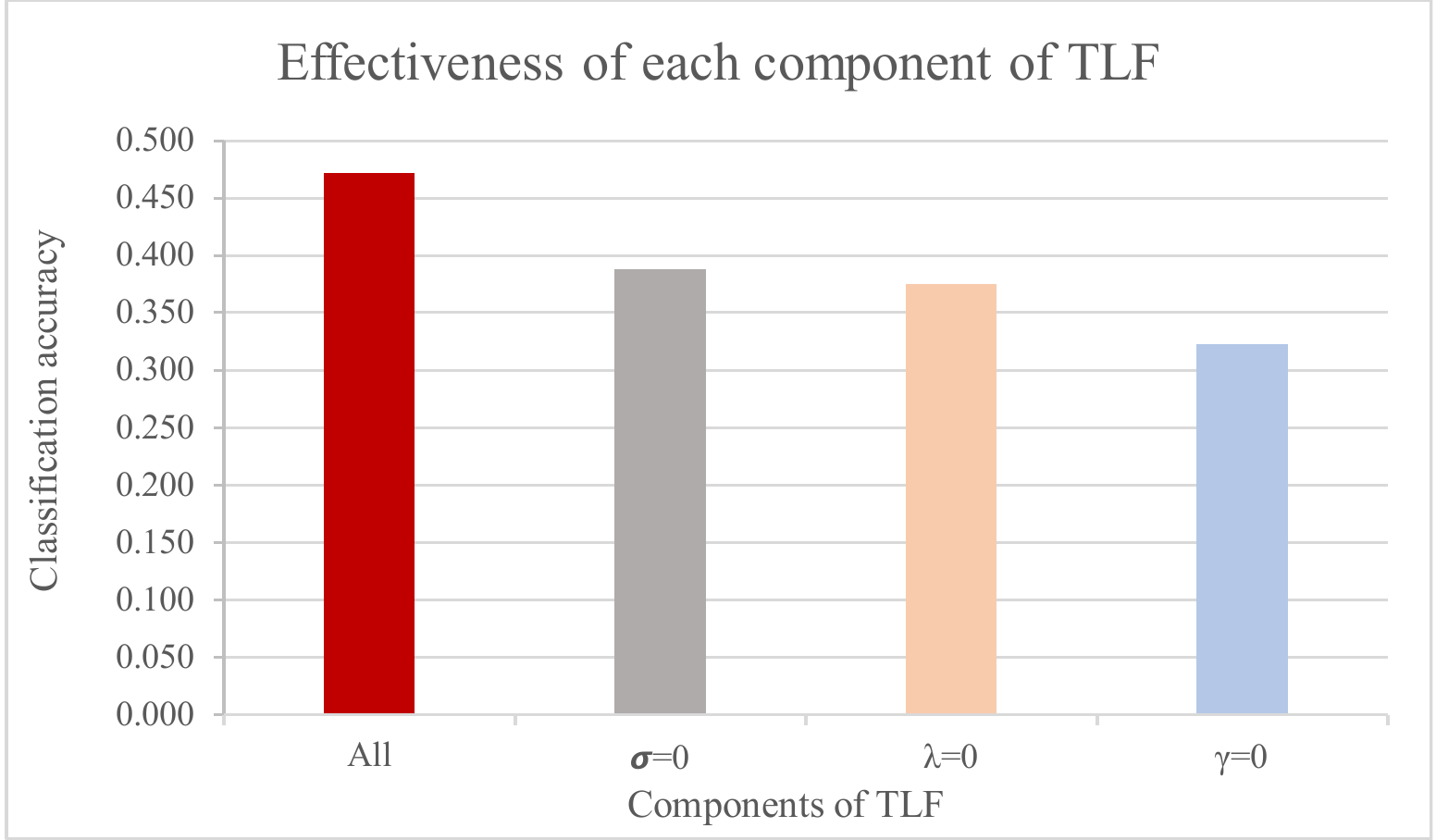}
	 \caption{Effectiveness of the use of ridge, MMD and Manifold regularizations in TLF is demonstrated in terms of classification accuracy on PIE dataset.}
	 \label{fig:componentanalysis}
\end{figure}

\section{Analysis of the Effectiveness of TLF in Some Challenging Situations}
\label{Analysis_Challenging_Situations}

After presenting the main experimental results and evaluation in Section~\ref{experiments}, we now explore the effectiveness of TLF in some challenging situations~\cite{rahman2022adaptive}, such as when datasets have missing values, as follows.

\subsection{Handling Datasets with Missing Values}
\label{missing_values}

Similar to SHDA~\cite{sukhija2019supervised} and ARTL~\cite{long2014adaptation}, in TLF we consider the datasets without missing values. However, in some real applications, due to many reasons datasets may contain missing values~\cite{rahman2013missing}. In RF~\cite{breiman2001random}, missing values are dealt by either (i) using an approach called SRD that simply deletes the records having missing values, or (ii) by imputing missing values with the attribute mean (for numerical) or mode (for categorical) values. If a dataset contains missing values, we suggest to (i) impute the missing values by using a state-of-the-art technique such as SiMI~\cite{rahman2013missing} or DMI\footnote{An implementation of DMI is publicly available in WEKA~\cite{witten2016data}}~\cite{rahman2011decision}, and (ii) consider the imputed dataset as input to TLF.  

We examine the effectiveness of TLF in handling missing values on PIE and OC datasets as shown in Fig.~\ref{fig:tlf_missing_values}. For PIE dataset, we artificially create missing values in all source and target datasets in MTDS scenario. The missing values are created randomly with 5 missing ratios: 10\%, 20\%, 30\%, 40\% and 50\%, where x\% missing ratio means x\% of the total records of a dataset have one or more missing values. The records are chosen randomly. Once a record is selected, y\% of total attribute values of the record are randomly made missing. The value of y is chosen randomly between 1 and 50.

For the SRD approach, we first delete the records having missing values from both source and target datasets and then build a classifier for the target domain by applying TLF on the datasets without missing values. The classifier is then used to classify the test data and calculate the classification accuracy. This approach is called ``SRD-TLF'' in Fig.~\ref{fig:tlf_missing_values}. For an imputation approach, we impute the missing values of both source and target datasets using SiMI~~\cite{rahman2013missing} and then build a classifier by applying TLF on the datasets without missing values. We then use the classifier to classify the test data and calculate the classification accuracy. This approach is called ``SiMI-TLF'' in Fig.~\ref{fig:tlf_missing_values}. The average accuracies of all source and target datasets for SRD-TLF and SiMI-TLF on PIE dataset are presented in Fig.~\ref{fig:pie_mvi} where the average accuracy of ``Original-TLF'' is obtained from the TLF column of Table~\ref{tab:overall_accuracy}. Similarly, for OC dataset, we calculate the average classification accuracies of Original-TLF, SRD-TLF and SiMI-TLF and present them in Fig.~\ref{fig:oc_mvi}. It is clear from the figures that if a dataset contains missing values, TLF performs better if the missing values are imputed instead of deleting the records having missing values. For small missing ratios, SiMI-TLF achieves almost the accuracy of TLF for a dataset without any missing values.

\begin{figure}[ht!]
\centering
  \setlength{\belowcaptionskip}{0pt}
	\setlength{\abovecaptionskip}{0pt}	
	    \subfigure[PIE dataset]
	    {
	        \includegraphics[width=0.45\linewidth]{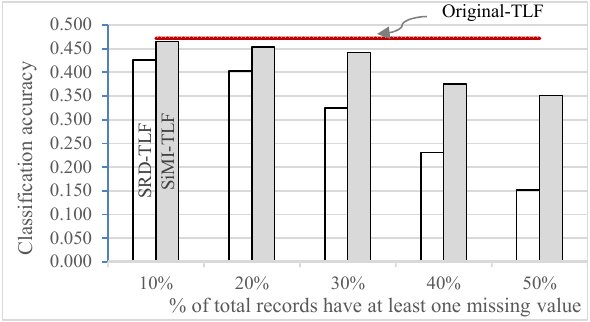}
	        \label{fig:pie_mvi}
	    }
	    \subfigure[OC dataset]
	    {
	        \includegraphics[width=0.45\linewidth]{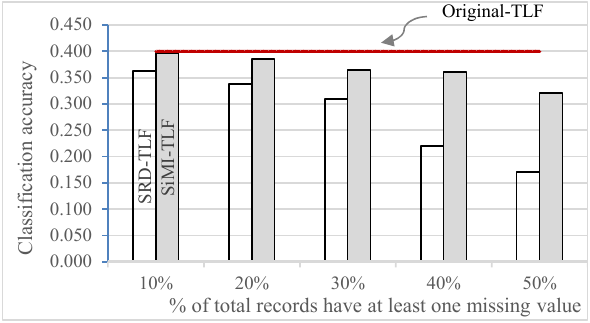}
	        \label{fig:oc_mvi}
	    }
	    \caption{Classification accuracies of Original-TLF, SRD-TLF and SiMI-TLF on PIE and OC datasets.}
	    \label{fig:tlf_missing_values}
\end{figure}

\subsection{Handling Class Imbalanced Datasets}
\label{imbalanced_batches}

Since it is possible to integrate any decision forest algorithm into TLF framework, TLF is capable of handling class imbalanced datasets by integrating an existing algorithm called CSForest~\cite{siers2018novel} which is designated to handle class imbalances problems and we call this variant TLF-C. For implementing the CSForest, we use the Java code from the Weka platform~\cite{witten2016data}. In our experiment, the default cost metrics of CSForest are set to: \textit{true-positive}=1, \textit{true-negative}=0, \textit{false-positive}=1, and \textit{false-negative}=5.

We investigate the effectiveness of TLF-C, TLF, CSForest and 3 existing transfer learning methods, namely SHDA, MEDA and ARTL on two class imbalanced datasets, namely segment0 and yeast1 that are publicly available at Keel repository~\cite{alcala2011keel}. We create source, target and test datasets from each original dataset as follows. We first create source dataset by randomly selecting 70\% records of the original dataset and then create target dataset with the remaining records of the original dataset. After that we divide the target dataset in to test and target datasets by randomly choosing 95\% and 5\%, respectively, records of the actual target dataset. We repeat this process for 10 runs. For each run, we find the result of a method. We calculate the average result of the 10 runs and present it in Table~\ref{tab:overall_avg_accuracy_imbalancebatch}. Since the classification accuracy is not a useful metric for evaluating classifiers built from class imbalanced datasets, we calculate Precision, Recall and F-measure (F1) for each test dataset.

Following the similar process discussed in Section~\ref{experimentalresults}, we obtain results for the methods, and present in Table~\ref{tab:overall_avg_accuracy_imbalancebatch}. Bold values in the table indicate the best results. For the cost metric, TLF-C achieves higher precision, recall and F1 than the other methods. It is clear from table that TLF has the ability to handle class imbalanced datasets by integrating an algorithm such as CSForest~\cite{siers2018novel}.
\begin{table*}[!ht]
\setlength{\belowcaptionskip}{-10pt}
	\setlength{\abovecaptionskip}{-5pt}	
\tiny
	\centering
	\caption{Overall Precision, Recall and F1 of TLF and existing methods on two class imbalanced datasets.}
	\renewcommand\arraystretch{1.2} 
	\renewcommand\tabcolsep{0pt} 
	\begin{tabular*}{\textwidth}{@{\extracolsep{\fill}}lccccccccccccccccccccc}
		\toprule
		\multirow{2}*{Dataset} & \multicolumn{7}{c}{Precision (higher the better)}& \multicolumn{7}{c}{Recall  (higher the better)}& \multicolumn{7}{c}{$F_1$  (higher the better)}\\  
	\cline{2-8}\cline{9-15}\cline{16-22}
		&CSForest-S&CSForest-T&SHDA&MEDA&ARTL&TLF&TLF-C&CSForest-S&CSForest-T&SHDA&MEDA&ARTL&TLF&TLF-C&CSForest-S&CSForest-T&SHDA&MEDA&ARTL&TLF&TLF-C\\
		\midrule
segment0&0.854&0.792&0.715&0.754&0.778&0.812&\textbf{0.914}&0.715&0.701&0.642&0.675&0.688&0.700&\textbf{0.846}&0.778&0.744&0.676&0.712&0.730&0.752&\textbf{0.879}\\
yeast1&0.541&0.481&0.425&0.445&0.451&0.501&\textbf{0.654}&0.488&0.479&0.408&0.412&0.415&0.452&\textbf{0.605}&0.513&0.480&0.416&0.428&0.432&0.475&\textbf{0.629}\\
		\bottomrule
	\end{tabular*}
	\label{tab:overall_avg_accuracy_imbalancebatch}
\end{table*}

%Conclusion section
\section{Conclusion}
\label{conclusion}

This paper introduced TLF, a supervised heterogeneous transfer learning framework, which learns a classifier for the target domain having only few labeled training records by transferring knowledge from the source domain having many labeled records. Differing form existing methods, in TLF we simultaneously address two key issues: feature discrepancy and distribution divergence. 

To alleviate feature discrepancy, TLF identifies shared label distributions that act as the pivots to bridge the domains (see Step 3 of Algorithm~\ref{algo_tlf} and Fig.~\ref{fig:basicconcept}). To handle distribution divergence, TLF simultaneously optimizes the structural risk functional, joint distributions between domains, and the manifold consistency underlying marginal distributions (see Step 4 of Algorithm~\ref{algo_tlf} and Section~\ref{generalframework}). In TLF, we propose a novel approach to determine nearest neighbors of a record automatically so that intrinsic properties of manifold consistency can be exploited in detail resulting in a high transfer performance (see Section~\ref{generalframework} and Fig.~\ref{fig:justifyautok}). Moreover, to avoid negative transfer, we identify transferable records of the source dataset by using the pivots (see Step 3 of Algorithm~\ref{algo_tlf} and Section~\ref{generalframework}).

The effectiveness of TLF is also reflected in the experimental results. We evaluate TLF on seven publicly available natural datasets and compare the performance of TLF against the performance of fourteen state-of-the-art techniques including SHOT++~\cite{liang2021source}, TNT~\cite{chen2016transfer}, ARTL~\cite{long2014adaptation}, SHDA~\cite{sukhija2019supervised} and MEDA~\cite{wang2020transfer}. We also evaluate how the size of the target dataset affects the classification performance of a transfer learning method and identify three scenarios, namely MTDS, ATDS, and BTDS.    
 From Table~\ref{tab:overall_accuracy}, Table~\ref{tab:overall_accuracy_atds} and Table~\ref{tab:overall_accuracy_btds}, we can see that in all datasets and for all scenarios, TLF outperforms the other techniques in terms of classification accuracy. Statistical sign test and Nemenyi test analyses (see Section~\ref{Experimental_stat_analysis}) indicate that TLF performs significantly better than the other methods at $z>1.96$, $p<0.025$ on all datasets.

From the experiments we highlighted the strengths and weaknesses of TLF as follows. The main strengths are: (1) TLF achieves the best classification performance on all datasets over the eleven state-of-the-art techniques (see Table~\ref{tab:overall_accuracy}, Table~\ref{tab:overall_accuracy_atds} and Table~\ref{tab:overall_accuracy_btds}), (2) TLF is capable of handling both feature discrepancy and distribution divergence issues simultaneously, (3) TLF has the ability to handle imbalanced datasets (see Section~\ref{imbalanced_batches}), and (5) TLF is capable of dealing with datasets having missing values (see Section~\ref{missing_values}). However, like existing methods including ARTL~\cite{long2014adaptation}, SHDA~\cite{sukhija2019supervised} and MEDA~\cite{wang2020transfer}, TLF has a limitation on processing multiple source domains. TLF is now suitable for handling only a single source domain at a time.

In future, we plan to explore the effectiveness of combining records from multiple source domains to learn classifiers for the target domain.

\section*{Acknowledgements}
We give our sincere thanks to the editors and the anonymous reviewers for their time and wise comments that have helped us to improve the quality of the study significantly.
%% References with bibTeX database:
\bibliographystyle{IEEEtran}
\bibliography{gea_pd_ref}

% biography section
\vspace{-10 mm}
\begin{IEEEbiography}[{\includegraphics[width=1in,height=1.25in,clip,keepaspectratio]{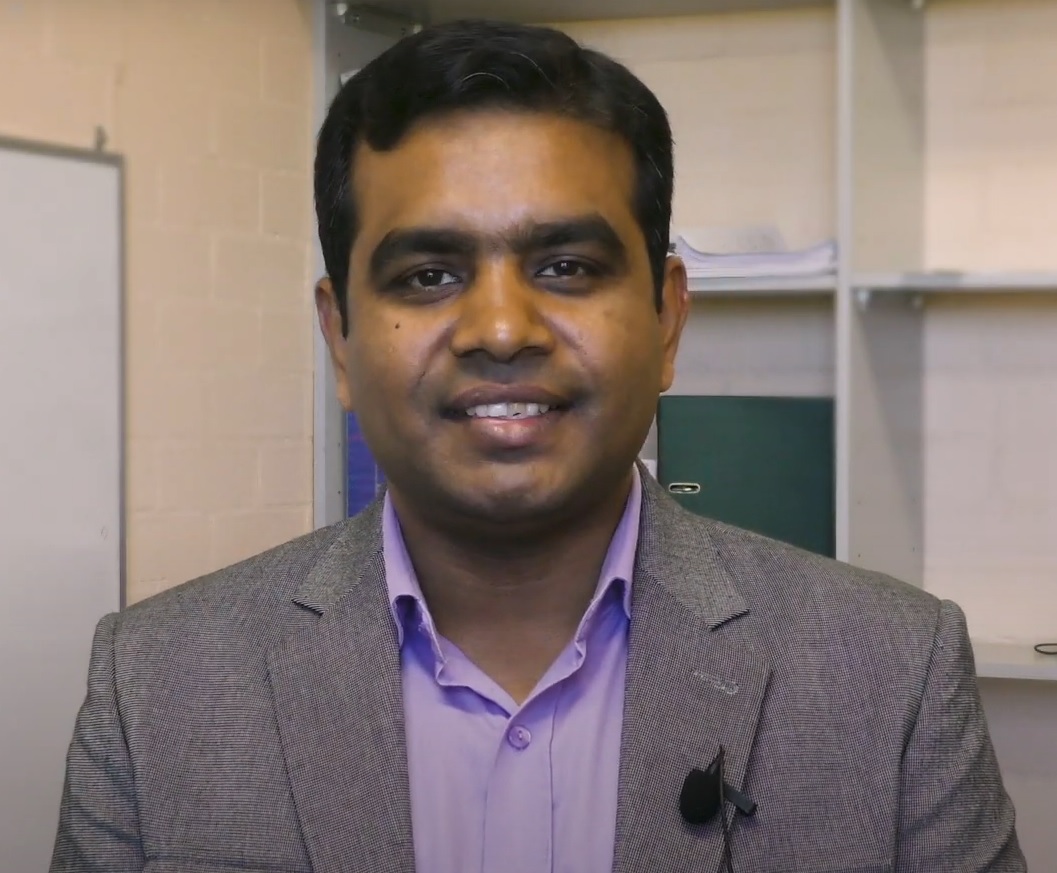}}]{Md Geaur Rahman} is a Research Associate in the School of Computing, Mathematics and Engineering, Charles Sturt University, Australia. He received his PhD degree from the Charles Sturt University, Australia. He is working as a Professor in the Department of Computer Science and Mathematics, Bangladesh Agricultural University, Bangladesh. His research interests include data pre-processing, data mining and knowledge discovery, machine learning, incremental learning and transfer learning. URL: \url{http://gea.bau.edu.bd/}
\end{IEEEbiography}
\vspace{-10 mm}
\begin{IEEEbiography}[{\includegraphics[width=1in,height=1.25in,clip,keepaspectratio]{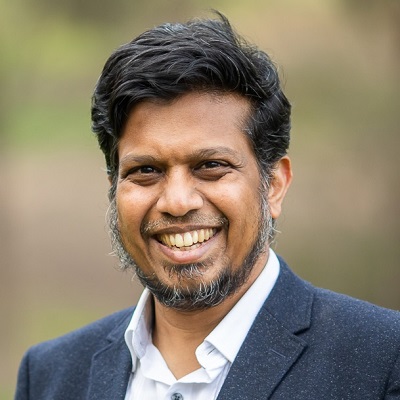}}]{Md Zahidul Islam}
 is a Professor in Computer Science, in the School of Computing, Mathematics and Engineering, Charles Sturt University, Australia. He is serving as the Director of the Data Science Research Unit (DSRU), of the Faculty of Business Justice and Behavioural Sciences, Charles Sturt University, Australia. He received his PhD degree from the University of Newcastle, Australia. His main research interests are in Data Mining, Classification and Clustering algorithms, Missing value analysis, Outliers detection, Data Cleaning and Preprocessing, Privacy Preserving Data Mining, Privacy Issues due to Data Mining on Social Network Users, and Applications of Data Mining in Real Life. URL: \url{http://csusap.csu.edu.au/~zislam/}
\end{IEEEbiography}
\vfill
%
%% Can be used to pull up biographies so that the bottom of the last one
%% is flush with the other column.
%\enlargethispage{-5in}

% that's all folks
\end{document}